%% file: main.tex
\PassOptionsToPackage{table,dvipsnames}{xcolor}
\documentclass{applemlr}

\input{apple_preamble}

\usepackage[utf8]{inputenc} %
\usepackage[T1]{fontenc}    %
\usepackage{hyperref}       %
\usepackage{url}            %
\usepackage{booktabs}       %
\usepackage{amsfonts}       %
\usepackage{nicefrac}       %
\usepackage{microtype}      %
\usepackage{xcolor}         %
\usepackage{graphicx}       %
\usepackage{subcaption}
\usepackage{amsthm,amssymb}
\usepackage{bbm}

\newtheorem{proposition}{Proposition}
\newtheorem{lemma}[proposition]{Lemma}

\newtheorem{manualtheoreminner}{Proposition}
\newenvironment{manualtheorem}[1]{%
  \ifblank{#1}
    {\renewcommand{\themanualtheoreminner}{\unskip}}
    {\renewcommand\themanualtheoreminner{#1}}%
  \manualtheoreminner
}{\endmanualtheoreminner}

\title{On the Importance of Gating: \\ Memorization vs. In-Context Learning in State Space Models}

\author[2,*]{William L. Tong}
\author[1]{Aryo Lotfi}
\author[1]{Emmanuel Abbe}
\author[1]{Kostas Vaggelakos}
\author[1]{Vishnu Banna}
\author[1]{Etai Littwin}
\author[1]{Josh Susskind}
\author[2]{Cengiz Pehlevan}
\author[1]{Eran Malach}

\affiliation[1]{Apple}
\affiliation[2]{Harvard University}
\contribution[*]{Work done while at Apple}

\abstract{
State Space Models (SSMs) have emerged as a compelling alternative to Transformers, enabling sequence modeling with constant memory and linear compute. Although SSMs exhibit reasonable performance and favorable computational characteristics, they continue to lag behind Transformers on tasks that require in-context learning and precise retrieval, slowing their adoption for large-scale language modeling. In this work, we demonstrate that both the success and failure of SSMs in these domains can be explained by studying the role of the gating mechanism, a prevalent component in modern recurrent networks. Specifically, we show through theory and experiments that this gating mechanism causes SSMs to first learn an in-weights ``memorization'' solution, while delaying, or even preventing, convergence to a correct in-context learning solution. Importantly, this happens even in cases where there are no fundamental limitations due to the architecture or its memory capacity. On the other hand, we find that gating is often beneficial for improving generalization to long sequence lengths. Our results illuminate the crucial role of the gating mechanism in shaping both the training dynamics and generalization of SSMs, and provide a basis for understanding and improving linear-time models.}

\metadata[Correspondence]{\sffamily William L. Tong: \url{wtong@g.harvard.edu}; Eran Malach: \url{e_malach@apple.com}}
\date{\sffamily\today}

\begin{document}

\maketitle

\section{Introduction}

In recent years, linear-time architectures like State Space Models (SSMs, e.g. \cite{gu2021efficiently}) and variants of Linear Attention \citep{katharopoulos2020transformers} have gained popularity for language modeling. Their primary benefit compared to Transformers is their memory and computational efficiency: their computational complexity grows linearly with the sequence length and their memory is constant, unlike Transformers that have quadratic computational complexity and linear scaling of memory\footnote{While a naive implementation of self-attention requires quadratic memory, efficient implementations such as FlashAttention \citep{dao2022flashattention} allow memory cost to be close to linear.}. Different works have demonstrated that small and medium scale linear-time architectures can reach loss that is better than Transformers \citep{gu2023mamba, yang2024gated}, and can also achieve better length generalization \citep{malach2025infinity}. However, despite years of research, linear-time architectures are still not widely used for large scale language modeling\footnote{While various recent models have adopted linear-time layers as part of a hybrid architecture \citep{qwen35blog, blakeman2025nvidia}, these models still rely on self-attention layers for enhanced retrieval capabilities.}. Indeed, several studies have identified fundamental limitations of linear-time architectures in performing in-context learning \citep{waleffe2024empirical}, copying \citep{jelassi2024repeat}, retrieval \citep{arora2023zoology} and handling long-contexts \citep{wang2025memmamba}. These observations may suggest that such architectures do not scale as well as Transformers, and that the computational efficiency comes at a significant cost in performance degradation for critical capabilities.

One common hypothesis as to why linear-time architectures lag behind Transformers is that there is simply an \emph{expressivity} gap: SSMs operate with fixed-size memory, and therefore any capability that requires large memory will be sacrificed. In this work, we explore an alternative explanation. We show that in many cases, the initialization and training dynamics of SSMs tend to favor solutions that overly rely on in-weights memorization, even when the memory capacity is sufficient for solving the task using a correct in-context learning (ICL) solution. In particular, we show that the \emph{gating} mechanism --- prevalent in many modern linear-time architectures --- biases SSMs to find memorizing solutions. At the same time, we show that the same gating mechanism enables SSMs to generalize better on sequences longer than the training data. Our main contributions are:
\begin{itemize}
    \item We theoretically analyze the training dynamics of a simplified gated linear-time model, showing that, depending on the choice of the initial gating parameter, this model converges first to a ``memorization'' solution, even when a better ICL solution can be reached. On the other hand, we show that gating enables length generalization in some cases.
    \item We validate our theoretical findings in synthetic retrieval and logical rules tasks, showing that gated SSMs may fail to find the correct ICL solution. We show that changing the gating parameter at initialization results in faster convergence in some settings.
    \item To further validate our results, we finetune Mamba models on a tool-calling task with multiple tools presented in-context. We show that by tuning the initial gating parameter (after pretraining but before SFT), we can increase or decrease the hallucination rate, showing the important role of the gating parameter in in-context learning. Interestingly, we find that we can even improve performance by using a weaker gating than learned during pretraining.
\end{itemize}

\paragraph{Related work.} Prior work finds that SSMs can learn in-context on synthetic tasks \citep{park2024can,grazzi2024mamba,li2025can}, while pretrained models remain weaker on retrieval-intensive evaluations \citep{waleffe2024empirical}. Copying and retrieval also deteriorate with distance or sequence length \citep{jelassi2024repeat,arora2023zoology}. Most closely related to our mechanism, recurrent gating has been associated both with stable length extrapolation and with recency bias, while near-no-decay ``mimetic'' initialization improves synthetic copying \citep{wang2024understanding,trockman2024mimetic}. We complement these results by clarifying the optimization competition between memorization and retrieval, and the related inference-time tradeoff between distant signal retention and distractor accumulation. Appendix \ref{app:related_work} offers a more detailed review.

\section{Theory: Retrieval with memorization ``shortcuts''} \label{sec:theory}

We begin by presenting a tractable setting to study the fundamental principles governing the relationship between gating, retrieval, and generalization. Formal statements and proofs are deferred to Appendix \ref{app:proof}. We aim to construct a retrieval task that differentiates between \textit{in-context learning} and \textit{memorization} (or ``in-weights learning"). In our setting, the model may use memorization to partially solve the task, obtaining non-trivial but imperfect accuracy; perfect performance requires in-context learning. We analyze the training dynamics of a simplified Mamba model. 

Our analysis shows that the model either converges quickly to the ICL solution (when initialized with weak gating), or enters a plateau where memorization dominates before eventually recovering in-context learning (when initialized with strong gating). We then extend our analysis to long contexts, showing that strong gating may improve generalization. We validate our theoretical results through experiments in both our synthetic setting and more complex, naturalistic tasks. Overall, our results suggest the dual role of gating: stronger gating may induce memorization over ICL, but on the other hand improves performance on long contexts in certain settings.

\subsection{Task}
Let $\rvx_1, \rvx_2, \ldots, \rvx_P \in \reals^d$ be random vectors sampled iid as $\rvx_i \sim \mathcal{N}(\vzero, \rmI / d)$. These points remain fixed through the duration of the task. To each of these $P$ points, we assign a binary label $y(\rvx_i) = \pm 1$. For each example, we sample an ordered context of length $\ell$ and a query index $q$ such that one context position contains $\rvx_q$. The context reveals (possibly perturbed) labels, while the final query token reveals only the vector $\rvx_q$. We embed the resulting input as
\[
\begin{pmatrix}
    \rvx_1 & \rvx_2 & \cdots & \rvx_\ell & \rvx_q \\
    \tilde{y}(\rvx_1) & \tilde{y}(\rvx_2) & \cdots & \tilde{y}(\rvx_\ell) & 0
\end{pmatrix} \,,
\]
where $\tilde{y}(\rvx_i)=\eta_i y(\rvx_i)$ and, independently for each context occurrence, $\Pr(\eta_i=-1)=\epsilon$ and $\Pr(\eta_i=1)=1-\epsilon$. Thus, the set of $P$ points and their underlying labels remain fixed, but the perturbations are resampled for every example. The target is the \emph{perturbed} label attached to the context occurrence of $\rvx_q$; the model needs to output $\tilde{y}(\rvx_q)$. If $\epsilon$ is small, then the model may succeed through simple \textit{memorization} by learning the mapping $y (\cdot)$. Otherwise, in order to attain high accuracy, the model must find a match for $\rvx_q$ among the $\ell$ context points to deduce the correct (potentially perturbed) label. The specific choice of input embedding, in which we stack keys and labels, follows prior theoretical work on ICL \citep{zhang2024transformer_icl,lu2025asymptotic_icl}, and lends itself to tractable, interpretable analysis. The task overall is inspired by synthetic formulations of in-context classification \citep{chan2022data,reddy2023mechanistic}.

\subsection{Model} \label{sec:simple_ssm_model}
We study a simple Mamba-like SSM based on linear attention with a global gating parameter $\lambda$. Let $\tilde{\rvx}_t = (\rvx_t , \tilde{y}(\rvx_t)) \in \reals^{d+1}$ be an augmented context token that includes the label, and let $a = -e^{\lambda}$ be the continuous time decay factor. Construct the hidden state as
$\rvh = \tilde{\rvx}_q + \sum_{t=1}^\ell e^{a (\ell - t + 1)} \, (\tilde{\rvx}_q^\intercal \tilde{\rvx}_t) \tilde{\rvx}_t$
and the model output as
$f(\tilde{\rvx}_1, \ldots, \tilde{\rvx}_\ell, \tilde{\rvx}_q) = \rvw^\intercal \rvh$, 
for some learnable readout $\rvw \in \reals^{d+1}$. We train the model through gradient descent with binary cross entropy (i.e. logistic) loss $\gL(f(\tilde{\rvx}_1, \ldots, \tilde{\rvx}_q), \tilde{y}(\rvx_q)) = \log(1 + \exp(-\tilde{y} f))$.

Our model is equivalent to a single-head S6 layer from Mamba 2, with key, query, and value matrices all set to the identity. The trainable parameters are the gating parameter $\lambda$ and readout $\rvw$. We initialize $\rvw = \vzero$, $\lambda = \lambda_0$, and study the outcomes for different choices of $\lambda_0$.

\subsection{Short context retrieval} \label{sec:short_retrieval}
We first consider retrieval in the setting where $d$ is very large and $P \ll d$. Under these conditions, the input tokens are effectively orthonormal: $\rvx_i \cdot \rvx_j \approx 0$ if $i \neq j$ and $\rvx_i \cdot \rvx_i \approx 1$. For this analysis, we assume that the tokens are exactly orthonormal and that $\epsilon \in (0,1/2)$. Without loss of generality, we focus on the learning dynamics for a single token $\rvx_i$ and fix its context position to $i$. When clear that we are referring to $\rvx_i$, we drop the arguments for $y, \tilde{y}$ and $f$, which should be read as for instance $y \equiv y(\rvx_i)$ and $f \equiv f(\tilde{\rvx}_1, \ldots, \tilde{\rvx}_\ell,\tilde{\rvx}_q)$ where $\rvx_q = \rvx_i$.

We decompose the model's readout as $\rvw = (\rvw_x,v)$, where $\rvw_x \in \reals^d$ is the weight vector associated to the input $\rvx$ and $v \in \reals$ is the weight associated with the label $\tilde{y}$. To retrieve the label, the model needs to assign mass to weight $v$. A weight configuration with $\rvw_x = 0$ and $v=1$ corresponds to a correct ICL solution. In-weights memorization corresponds to assigning non-zero weights to the input vector, thus memorizing the underlying mapping between inputs and values.  We define $u = y(\rvx_i) (\rvw_x \cdot \vx_i)$, which measures the model's prediction according to the ``memorized'' label.

We denote by $\alpha$ the decay factor between the query token and the corresponding context input, which appears at position $i$, namely $\alpha = \exp(-(\ell-i+1)e^\lambda)$. Note that strong gating (i.e. high $\lambda$) corresponds to low $\alpha$, increasing the rate at which earlier tokens are decayed in the context. Weak gating (low $\lambda$) corresponds to high $\alpha$, preserving more of the context.

We track the signed margin for point $\rvx_i$, which is given by $z = \tilde{y} f$. The signed margin tells us the degree to which a model classifies $\rvx_i$; a large, positive signed margin suggests the model classifies the point very confidently and correctly. We can show that
$z = (1 + \alpha) \eta u + \alpha v $, so the signed margin naturally decomposes into two terms (see full details in Appendix \ref{app:proof}).
We call $\delta_m = (1 + \alpha) u$ the \textit{memorization margin} and $\delta_r = \alpha v$ the \textit{retrieval margin}, so that $z = \eta \delta_m + \delta_r$. These margins quantify the degree to which the model is either memorizing or retrieving the label from context.

\begin{manualtheorem}{\ref{prop:behavior_init}}[Behavior at initialization]
    At initialization, the ratio of the initial growth rate of the retrieval margin to the initial growth rate of the memorization margin is
    $\frac{\dot{\delta}_r (0)}{\dot{\delta}_m (0)} = \frac{P \alpha^2}{(1 - 2\epsilon)(1 + \alpha)^2} $.
\end{manualtheorem}

Proposition \ref{prop:behavior_init} analyzes the behavior of the model at initialization. For large $\alpha$ (weak gating), retrieval behavior dominates, and the model is able to quickly learn to retrieve the in-context label, solving the task. However, for small $\alpha$ (strong gating), memorization dominates, and the model is insensitive to the true label. This demonstrates that strong gating dampens in-context information, favoring memorization behavior. Proposition \ref{prop:behavior_init} suggests an approximate critical scale $\alpha^* \approx \sqrt{\frac{1 - 2\epsilon}{P}}$. If $\alpha > \alpha^*$ at initialization, then the model favors retrieval; if $\alpha < \alpha^*$, the model favors memorization. We compare this prediction with trained SSMs in the experiments shown in Figure \ref{fig:theory}a.

Next, we move beyond initialization to analyze training dynamics. Even if the model begins stuck memorizing, we find that it eventually escapes the memorization solution, with escape time depending on the initial value of $\alpha$. We characterize the transition from memorization to retrieval by measuring the time at which $\delta_r > \delta_m$, i.e. when the signed margin is dominated by the retrieval margin. Our next proposition specifies the time at which this happens:

\begin{manualtheorem}{\ref{prop:escape_time}}[Informal: Escape time from memorization]
    Let $T$ be the first time at which $\delta_r(T)=\delta_m(T)$. Then, under the leading-order reduced dynamics on the memorization plateau and for small enough $\alpha_0$,
    $T = \Theta\left( \frac{1}{\alpha_0 \log (1/\alpha_0)}\right)$.
\end{manualtheorem}

A formal version of the result and its proof are given in Proposition \ref{prop:escape_time}; the assumed reduced dynamics are stated explicitly in \eqref{eq:v_reduced_dyn} and \eqref{eq:a_reduced_dyn}.
This result establishes that a model is not fatally trapped in a memorization solution, but given sufficient time, it will also learn the full retrieval solution. However, the required time may be immense. Since $\alpha = \exp(- (\ell - i + 1) e^{\lambda})$, $T$ may be exponential in the distance from the query $\ell - i + 1$, or indeed \textit{double} exponential in the scale of the initialization for $\lambda$. Hence, to learn a retrieval task for which the relevant tokens are far from the query, an initialization with a very weak gating parameter $\lambda$ (and therefore $\alpha$ close to 1) is essential to perform well in a reasonable amount of time. We validate these observations in a trained SSM in Figure \ref{fig:theory}b.

\paragraph{Interim summary}
In the short-context retrieval, the gating parameter $\lambda$ controls the competition between memorization and retrieval. When gating is strong (and $\alpha$ is small), the label-coordinate readout $v$ receives only a weak retrieval signal, while the token-specific component $\rvw_x$ can quickly learn the fixed label assignment. This creates a memorization plateau that can be escaped in time of order $\Theta((\alpha\log(1/\alpha))^{-1})$. Strong gating can delay in-context retrieval for a very long time, especially for tokens far from the query, even though the retrieval solution is eventually reachable. The short-context analysis therefore favors weak decay at initialization: preserving history makes it easier for gradient flow to discover the in-context solution before memorization dominates. 

The next section shows why this conclusion may not continue to hold for long contexts, where preserving too much context through weak gating causes noise to accumulate from too many tokens.

\subsection{Long context generalization}
\label{sec:long_retrieval}

We next consider the setting where the number of examples in the sequence $\ell$ is large. We focus on studying retrieval when the model is trained on finite $\ell$, then must generalize to arbitrary $\ell$. To isolate retrieval, we now assume that the $\rvx_i$ are random Gaussian and that the labels are drawn uniformly s.t. $ y_i \sim \{\pm 1\}$ (this corresponds to setting $P \to \infty$ and $\epsilon=1/2$). In this setting, a model trained with finite $\ell$ learns a solution where $\rvw_x = 0$ and $v = \mathcal{O}(1)$ (Proposition \ref{prop:ssm_retrieval_solution}). In the remainder of this section, we analyze the error of this solution as sequence length grows arbitrarily, identifying an ``effective context length" determined by gating outside of which retrieval fails. For this analysis, we assume that the gating parameter $\lambda$ is fixed at initialization. In practice, we find that gating remains close to its initialization in this setting, and our intuitions are representative of experiment (see Section \ref{app:gate_drift} for a more detailed discussion).

Denote $\tau = e^{-\lambda}$, which characterizes the effective context length (as we will later show). Note that strong gating corresponds to a smaller $\tau$. Let $\mathrm{Err}(\tau)$ be the error of the model given some choice of $\tau$, namely $\mathrm{Err}(\tau) = \Pr[f \ne \tilde{y}(\rvx_q)]$. The following result analyzes the error of the model at the no-decay (large $\tau$) limit:

\begin{manualtheorem}{\ref{prop:decay_for_length_gen}}[Informal: no-decay limit for long-context retrieval]
    For retrieval with $\ell\geq 2$ input vectors of dimension $d$ it holds that
    $\lim_{\tau \to \infty} \mathrm{Err}(\tau) \approx \Phi(-\sqrt{d/\ell-1})$.
\end{manualtheorem}

Proposition \ref{prop:decay_for_length_gen} (stated formally in Appendix \ref{app:proof}) shows that without decay, the relevant signal competes against all $L=\ell-1$ distractors. We validate this prediction in Figure \ref{fig:theory}c. If $L/d\to\infty$ (in particular, if $d$ is fixed and $L\to\infty$), then the no-decay signal-to-noise ratio converges to zero and the error approaches $\Phi(0)=1/2$. This is the sense in which some decay is necessary for long-context retrieval: the model must reduce the effective number of retained distractors.

The next proposition gives more information about \textit{which} positions specifically we have a chance to learn as context length increases and gating strengthens. We denote by $\mathrm{Err}(\tau, \Delta)$ the error of the model on queries with $\Delta$ distance to the context input, i.e. $\mathrm{Err}(\tau, \Delta) = \Pr[f \ne \tilde{y}(\rvx_q) | \Delta = \ell - q+1]$.

\begin{manualtheorem}{\ref{prop:local}}[Decay induces locality]
    Suppose $\ell/\tau\to\infty$, $1\leq \tau=o(d)$. For any fixed $\delta \in (0, \frac{1}{2})$, let $\Delta_{\max}$ be the maximum distance from the query such that the model's error is at most $\delta$, i.e. the maximal $\Delta$ s.t $\mathrm{Err}(\tau, \Delta) \le \delta$. Then
    $\Delta_{max} = \Theta\left( \tau \log \frac{d}{\tau} \right) \,$.
\end{manualtheorem}

Provided that $\tau \ll d$, Proposition \ref{prop:local} suggests that the distance of the furthest token away from which we can retrieve scales like $\tau\log(d/\tau)$, which we validate in Figure \ref{fig:theory}d. Thus $\tau$ controls the effective context size up to a logarithmic factor. Hence, generalization is only possible when we enable decay, but decaying induces a locality on our retrieval. Random retrieval is impossible as context length increases, but provided the relevant tokens remain close to the query, we may generalize to arbitrarily long contexts.

Together, the two regimes expose the basic tradeoff of gating. Weak gating makes it easier for gradient flow to favor retrieval over memorization in short contexts, but in long contexts it allows irrelevant tokens to accumulate enough variance to destroy the signal. Strong gating controls this variance and enables length generalization (within its effective context length $\tau$), but may push the model into a long memorization plateau for short context retrieval.

\begin{figure}
    \centering
    \includegraphics[width=0.9\linewidth]{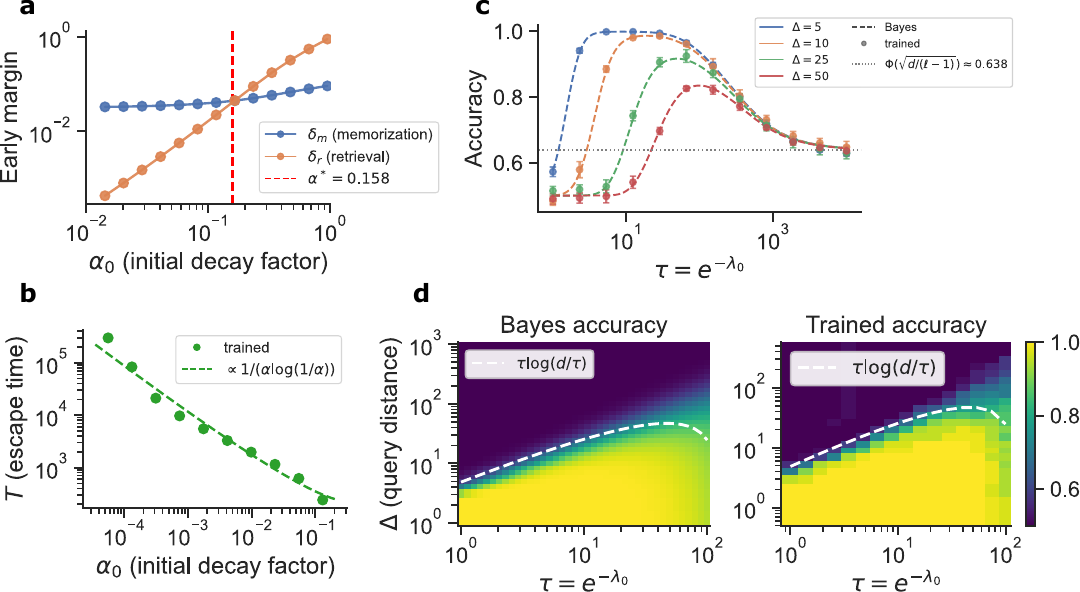}
    \caption{\textbf{Validation of theoretical predictions in a simple trained SSM.} We train the simple SSM described in Section \ref{sec:simple_ssm_model} on our synthetic retrieval task, and verify that its behavior aligns with our theoretical predictions. \textbf{(a)} The critical $\alpha^*$ predicted from Proposition \ref{prop:behavior_init} determines whether the retrieval margin $\delta_r$ or the memorization margin $\delta_m$ dominates at initialization. \textbf{(b)} The memorization plateau escape time measured from fully trained SSMs aligns closely with our predicted scaling from reduced dynamics in Proposition \ref{prop:escape_time}. \textbf{(c)} Bayes accuracy (dashed line) and experimentally measured accuracy (points) for varying $\tau$. As $\tau$ increases, the accuracy asymptotes to the prediction from Proposition \ref{prop:decay_for_length_gen}. \textbf{(d)} Heatmap comparing Bayes accuracy and accuracy of a trained SSM for varying $\tau$ and $\Delta$. The contour predicted from Proposition \ref{prop:local} fits the measured accuracies well. Appendix \ref{app:theory_exp_details} enumerates the specific configurations for each experiment.}
    \label{fig:theory}
\end{figure}

\section{Multi-Token Retrieval}
\label{sec:kv_retrieval}

In the remainder of this paper, we examine retrieval-based experiments that interrogate our theoretical intuitions in progressively more realistic settings. We test whether weaker gating promotes better in-context retrieval, and whether stronger gating promotes better length generalization. We begin by examining a simple multi-token retrieval task.

\subsection{Setup}

For a fixed $N$, each training sample presents a list of $n \sim \mathrm{Unif}[1,N]$ key-value pairs in-context, followed by a set of query keys where each key/query consists of $m$ tokens and the task is to predict the corresponding values of queried keys: $k_1:v_1,k_2:v_2, \dots,k_n:v_n|k_5:?,k_1:?,k_2:?$

More specifically, we choose a query ratio $q \in (0,1)$ and train the model in two settings: \emph{best-first}, where we query all the first $q\cdot n$ keys, and \emph{best-last}, where we query all the last $q \cdot n$ keys. The best-first setting emphasizes in-context retrieval. From Section \ref{sec:short_retrieval}, we learned that increasing the distance between the target token and query amplifies the effect of gating by reducing the decay factor $\alpha$. Hence, placing the relevant keys early in the context increases their distance to the query, amplifying the effect of gating. In contrast, the best-last setting emphasizes length generalization. From Section \ref{sec:long_retrieval}, we learned that strong gating improves length generalization but also imposes locality. We therefore predict that placing the relevant keys late in the context supports length generalization for strong gated models.

Like in our theoretical setting, we fix a static dictionary of key-value pairs shared across all examples; then for each in-context demonstration, we sample a pair from the dictionary and perturb each value token independently with probability $\epsilon$. We set $N=512$, $\epsilon=0.1$, $q=0.1$, and present each key/value using $m=8$ tokens. Under these parameters, memorizing the static mapping yields sequence accuracy $(1-\epsilon)^m \approx 0.43$; while achieving perfect accuracy requires in-context learning. 

To smoothly interpolate between standard and no-decay initialization, we use an \emph{offset} parameterization: we shift each head's initial $\lambda_0$ by a constant $\Delta \lambda_0$, i.e.,\ $\lambda_0 \leftarrow \lambda_0 + \Delta \lambda_0$. Setting $\Delta \lambda_0 = 0$ recovers the standard initialization, while $\Delta \lambda_0 \to -\infty$ approaches the no-decay regime. This allows us to study the effect of gating strength as a continuous parameter. See Appendix~\ref{app:experimental_details_kv_retrieval} for details.

\subsection{Results}
Figure \ref{fig:perturbed_kv}a and b present results for the best-first setting. When the gating parameter is small at initialization, the model learns the ICL solution. However, as the gating increases, the escape time from the memorization solution gets progressively longer. Further, Proposition \ref{prop:escape_time} predicts an approximate scaling $T \propto 1/\alpha$ up to log factors for escape time $T$ and decay factor $\alpha$. Since $\alpha \propto e^{-\text{offset}}$, we may expect that $T \propto e^{\text{offset}}$. Indeed, panel b plausibly supports a linear relationship between $T$ and $e^{\text{offset}}$.

In Figure \ref{fig:perturbed_kv}c, we consider length generalization. Using the best-last setting, we evaluate how the gating initialization affects performance when the context is extended beyond the 512 key-value pairs observed during training. We observe that stronger gating at initialization leads to better length generalization, consistent with our theoretical results. 

\begin{figure}
    \centering
    \includegraphics[width=0.9\linewidth]{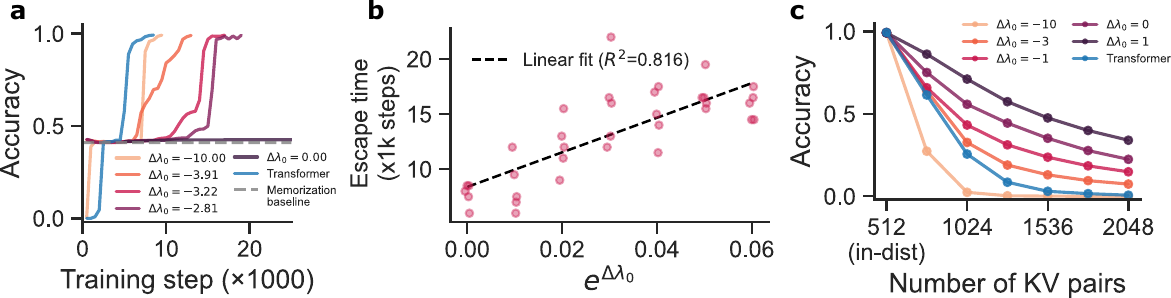}
    \caption{\textbf{Performance on multi-token retrieval.} \textbf{(a)} Accuracy across training steps for varying offset. Strong gating (large offset) never departs the memorization baseline within the training period. Smaller offset converge to perfect accuracy progressively faster. \textbf{(b)} Escape time across offsets. Proposition \ref{prop:escape_time} predicts a linear scaling, which seems plausible in this experimental setting. \textbf{(c)} Accuracy across increasing context lengths. Models were trained on up to 512 KV pairs. Strong gating (large offset) generalizes better than weaker gating.}
    \label{fig:perturbed_kv}
\end{figure}

\section{Logical Rules Retrieval} \label{sec:logic_rules_retrieval}
The tasks above are ``pure'' retrieval, where the model simply needs to match a pattern in the context. In practice, many settings involve more complex behavior, where the model needs to leverage information in the context to reason or act. To study more complex in-context retrieval, we consider agentic tool-use. 

During a tool use interaction, the model is presented with a pool of potentially relevant tools. The model must then retrieve an appropriate tool that advances the user's query. In-context retrieval is naturally essential as the model must determine from context which of many potentially novel tools to select. Length generalization is also important as the number of tools presented to the model may be variable. More tools implies a greater likelihood that a relevant one is present, but also increases the size of the context. 

This section develops a synthetic task that measures retrieval over \textit{Horn clauses}, logical statements with a convenient form for capturing knowledge and relationships. Our task is inspired by tool-calling in realistic settings while abstracting away many of the confounds. 

\subsection{Setup}
A Horn clause is a logical statement that has the form $P_1 \wedge P_2 \wedge \ldots \wedge P_n \rightarrow Q$, where the symbols $P_i, Q$ are called \textit{propositions}. 
Translated into English, this Horn clause reads ``if $P_1$ through $P_n$ are true, then $Q$" is true. Horn clauses may be combined into longer deductions. For example, suppose I have clauses (1) $P_1 \rightarrow Q$ and (2) $P_2 \wedge Q \rightarrow R$. If $P_1$ and $P_2$ are true, then I may use (1) to derive $Q$, and (2) to derive $R$, so $R$ must also be true.

Horn clauses are similar to function calls required in tool use. Like a function, a clause takes a set of inputs and produces an output. Suppose a user queries, ``what is the weather right now in California?" The query embeds a set of propositions (``the time is 3:00pm," ``the location is California"). The model must then call the ``\texttt{get\_weather}" function, which may be represented as the Horn clause (\texttt{time\_is\_3pm}), (\texttt{location\_is\_california}) $\rightarrow$ (\texttt{weather\_is\_sunny}). Horn clauses certainly do not capture all the nuances of function calls, but they offer a simple and controlled analogy that bridges our purely synthetic tasks above with the full tool use experiments we present in Section \ref{sec:tool_calling}.

This task is constructed in the following way. We define $M$ sets of propositions $\gS_1, \ldots, \gS_M$. For each pair of adjacent sets $\gS_i, \gS_{i+1}$, we sample $K$ Horn clauses with form $A \rightarrow B$ where $A \in S_i$ and $B \in S_{i+1}$. The prompt consists of a starting proposition $P \in S_1$ and a goal proposition $Q \in S_M$. The model must produce the correct sequence of Horn clauses to trace an intermediate sequence of propositions joining $P$ and $Q$. Clauses are prompted in single turns: every turn, the model produces a single clause. The starting proposition is then updated to be the output of the chosen clause. Hence, to construct a chain from $P$ to $Q$, the model outputs clauses through $M-1$ rounds of interaction. 

After being sampled, Horn clauses are fixed for the duration of the task. Each clause is assigned a random numeric name, which the model must output to select the corresponding clause. The mapping from names to clauses is presented in-context, and may vary between examples. Hence, a model must retrieve the appropriate clause in-context to advance the deduction, and cannot rely consistently on memorization of the name-to-clause association.
 During each turn, we select $k$ clauses to present in context. If the current starting proposition is in set $\gS_i$, then the input proposition of each context clause is also in $\gS_i$ \footnote{Since there are $K$ total clauses transitioning from $\gS_i$ to $\gS_{i+1}$, we require that $k < K$.}. 
Figure \ref{fig:horn}a illustrates the task.

We compare the performance of different Mamba models trained with varying gating strengths on this task. We also include a Transformer baseline. Specific details on training and task configuration are recorded in Appendix \ref{app:logic_rules_retrieval}.

\begin{figure}
    \centering
    \includegraphics[width=0.9\linewidth]{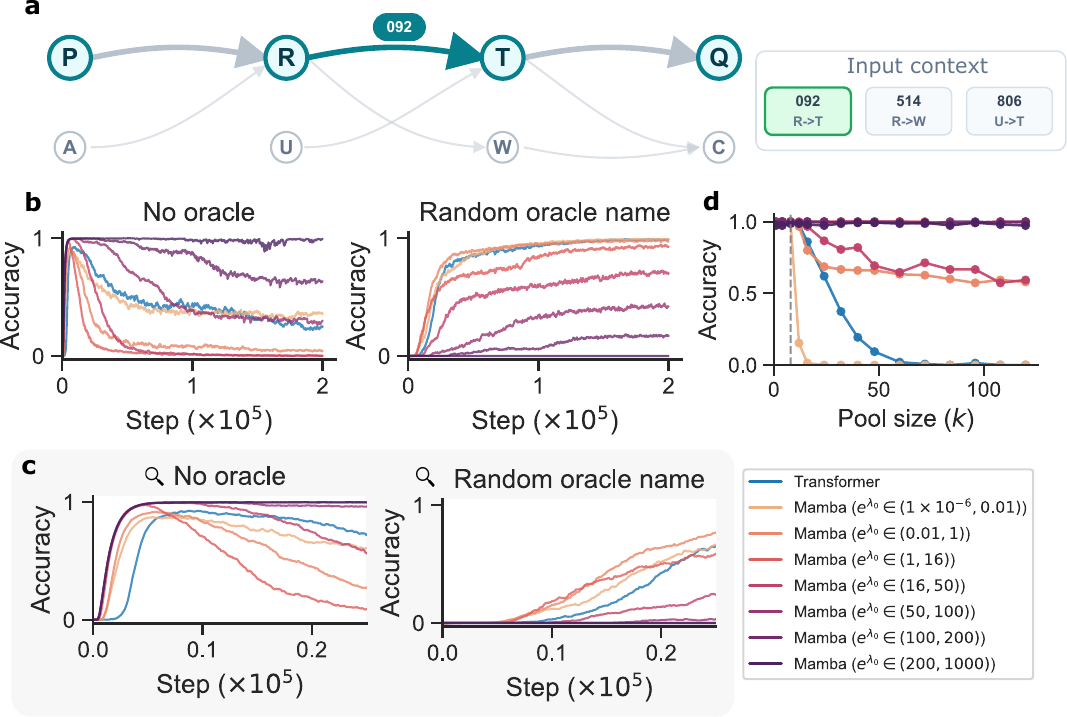}
    \caption{\textbf{Performance on logical rules retrieval.} \textbf{(a)} The logical rules retrieval task. Horn clauses transition between sets of propositions, with a starting proposition $P$ and goal proposition $Q$. The model must select the correct clause by name from its context to advance the deduction. \textbf{(b)} In-distribution retrieval performance. ``No oracle" tests memorization omitting the oracle clause in the context. ``Random oracle name" tests retrieval by including an oracle clause in the context, but resampling its name. Weaker gating implies stronger retrieval performance and faster escape from the memorization plateau. \textbf{(c)} The same as in panel (b), but zoomed in to the start of training. \textbf{(d)} Length generalization performance. Stronger gating leads to stronger generalization.} 
    \label{fig:horn}
\end{figure}

\subsection{Results}

Figure \ref{fig:horn} summarizes our results on the logic rules retrieval task. Our results support the intuition that weaker gating promotes in-context retrieval, whereas stronger gating benefits length generalization.

\paragraph{In-context retrieval.} We compare retrieval performance on in-distribution context lengths, measured by the accuracy of the first turn\footnote{It is possible that multiple paths exist from the current proposition to the goal. To account for these cases, first-turn accuracy considers a clause to be correct if the goal is reachable from the output proposition.}. Clause ordering is random, but an \textit{oracle} clause is injected at a random position in every context during training. Similar to our prior setups, the mapping from names to clauses is static. However, names among the first $q = 0.1$ proportion of clauses are resampled per example, requiring the model to retrieve these from context rather than use their memorized association. This is analogous to the ``best-first" setup from Section \ref{sec:kv_retrieval}.

Figure \ref{fig:horn}b compares accuracy in two settings, and Figure \ref{fig:horn}c zooms into early training. ``No oracle" plots accuracy when the oracle is not injected into the context. Hence, a model would attain high accuracy only if it memorizes the name-to-clause mapping. "Random oracle name" plots the accuracy on examples in which the oracle \textit{is} injected, but its name is resampled\footnote{The oracle with resampled name is injected within the first $q$ proportion of clauses, matching the training setup.}. A model attains high accuracy in this setting only if it learns to retrieve names in-context.

In general, we see that models with stronger gating tend to memorize for longer, maintaining high accuracy in the absence of an oracle but struggling to retrieve when its name is resampled. However, consistent with our theoretical intuition, the memorization phase appears to be a plateau that gives way to some degree of retrieval, where stronger decay delays the onset of retrieval behavior. For very weak gating ($e^{\lambda_0} < 1$), the model never learns a full memorization solution and never saturates accuracy in the absence of an oracle, consistent with an initial bias for learning retrieval.

\paragraph{Long context generalization.} We next consider generalization to longer contexts. Clauses are ordered according to the ranking model described in Appendix \ref{app:pl_ranking_model}, where the most relevant clause appears closest to the query, similar to the ``best-last" setup in Section \ref{sec:kv_retrieval}. All names are resampled for every example, forcing the model to learn retrieval. Models are trained on examples with variable pool size up to $k = 8$, then tested on larger pools up to the maximum $K=128$ clauses per transition.
Figure \ref{fig:horn}d plots our length generalization results. Consistent with our intuition, models with weaker gating generalize poorly, with decaying performance at longer lengths. Models with stronger gating continue generalizing perfectly to lengths well beyond the training distribution.

\section{Tool-Calling SSMs with Tool-Retrieval}
\label{sec:tool_calling}

To validate the intuitions developed in our theoretical setting and synthetic tasks, we study a naturalistic tool-calling task based on the Berkeley Function Calling Leaderboard (BFCL) \citep{patil2025bfcl}. BFCL is a leading evaluation benchmark that measures a model's ability to call tools. Given a natural language user query and pool of suggested tools, the model must execute a correct tool call. This setup furnishes a real-world tool use setting to validate our intuitions.

\subsection{Setup}
BFCL examples consist of a natural language user query
together with a tool or list of tools to call
. Tools are also presented with the arguments they take and a brief description of their function. The model must then decide on a tool, and output a well-formed function call 
, which is validated by a benchmark-specific deterministic checker. All parts of the function call must be correct in order for the example to pass. BFCL includes many different splits. We focus on evaluating the \texttt{python simple} split, where the tool calls consist of Python-style function invocations.

We extend BFCL to include variable tool pool sizes. By default, \texttt{python simple} presents only a single suggested tool. To enable variable pool sizes, we first collect all tools across all examples into a shared pool. For each example, we then use BM25 \citep{robertson2009bm25} to rank the tools according to their relevance to the user query. Finally, we take the top $k$ tools from the ranking and present them to the model. This procedure reflects standard retrieval-augmented tool-use pipelines, in which a retriever first narrows a large tool or API set before the LLM selects and invokes tools; sparse retrievers such as BM25 are common baselines in this setting \citep{patil2024gorilla,qin2023toolllm}. In cases where BM25 does not surface the correct tool for an example, we inject the correct tool at a random position in the context.

We evaluate a 2B parameter Transformer and 1.5B parameter Mamba models that were pretrained 300B tokens from the Nemotron-CC-HQ pretraining dataset \citep{su2025nemotron}, then finetuned on the Nemotron Agentic tool-calling split \citep{basant2025nvidia}. Aside from our custom tool pool system, we use the default BFCL evaluation harness to measure accuracy. See additional details on the experiment in Appendix \ref{app:bfcl_experimental_details}.

\begin{figure}
    \centering
    \includegraphics[width=0.9\linewidth]{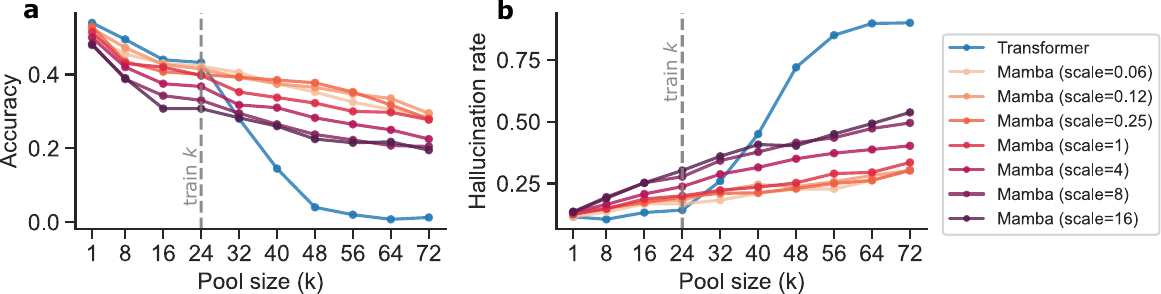}
    \caption{\textbf{Evaluation on the BFCL benchmark.} \textbf{(a)} Accuracy across tool pool sizes. Models were trained on tool pools with size up to $k=24$. \textit{Scale} is a factor $s$ applied to Mamba's decay rate before SFT (equivalently, $\lambda\leftarrow\lambda + \log s$ in our log-rate notation). Consistent with our intuition, models with weaker gating (lower scaling) perform better. Transformer performs well only on in-distribution lengths. \textbf{(b)} Hallucination rate counts examples where the model produces a syntactically valid tool call, but the tool does not exist in the input pool. Models with weaker gating also tend to hallucinate less, indicating stronger retrieval.}
    \label{fig:bfcl}
\end{figure}

\subsection{Results}

Figure \ref{fig:bfcl} summarizes our results on BFCL. All models were trained with variable pool size up to $k = 24$, then evaluated on larger pools. For the Mamba models, to control gating strength while also retaining skills learned during pretraining, we scale the gating parameter rather than re-initialize it. Larger scaling corresponds to stronger gating. Like in Section \ref{sec:kv_retrieval}, the tool pool is ordered ``best-first" so that the most relevant tool comes \textit{first}, emphasizing the need for weak gating to preserve context.

We plot both the overall accuracy of each model in Figure \ref{fig:bfcl}a, as well as the hallucination rate in Figure \ref{fig:bfcl}b. Hallucination is measured by counting examples in which the model outputs a syntactically valid tool call, but the function does not appear among the provided pool. Consistent with our intuition, we see that models with weaker gating tend to score moderately better and hallucinate less, successfully retrieving the correct tool from their context more reliably. However, gating does not appear to influence length generalization in the range we probe. Rather, all Mamba models continue to length generalize better than the Transformer baseline, suggesting some benefit from even weak gating.

\section{Discussion}
\label{sec:discussion}
In this work, we studied the role of the gating mechanism in linear-time architectures, focusing on the Mamba model. We showed that the initialization of the gating can affect training dynamics in SSMs, causing the model to prioritize memorization over in-context learning. On the other hand, we showed that gating improves length generalization in certain settings. While our results identify the gating mechanism as a crucial factor affecting in-context learning capabilities, our work does not immediately offer a solution for improving such capabilities, as our focus is primarily on scientific understanding of SSMs. We note that it is unclear whether simply changing the initialization, or even removing the gating mechanism, may be a good solution, as gating improves long-context performance in many regimes. However, it is possible that a simple gating mechanism using multiplication by a scalar is not sufficient for complex memory management required in common retrieval problems. Instead, state-transition through higher-dimensional linear operators, or even non-linear transitions, might be required for optimal retrieval in long-context settings. We leave the investigation of these directions to future work.

\section*{Acknowledgments}
WLT is supported by a Kempner Graduate Fellowship. CP is supported by an NSF CAREER Award (IIS-2239780), DARPA grants DIAL-FP-038 and AIQ-HR00112520041, the Simons Collaboration on the Physics of Learning and Neural Computation, and the William F. Milton Fund from Harvard University. This work has been made possible in part by a gift from the Chan Zuckerberg Initiative Foundation to establish the Kempner Institute for the Study of Natural and Artificial Intelligence. LLMs were used in implementing the experiments, checking proofs, and polishing the writing.

\bibliographystyle{plainnat}
\bibliography{references}

\clearpage
\appendix

\setcounter{proposition}{0}

\section{Related Work}
\label{app:related_work}
\paragraph{In-Context Learning with SSMs}
In-context learning (ICL), the ability of language models to adapt their predictions based on examples in their context, has been extensively studied in Transformer-based models \citep{dong2024survey}. More recently, different works have studied ICL in SSMs, showing that SSMs achieve competitive ICL behavior on many synthetic learning problems \citep{park2024can, grazzi2024mamba}, learn in-context with outliers \citep{li2025can} or learn from task mixtures \citep{li2024can}. While SSMs seem to perform well on ICL tasks when trained in toy settings, empirical results on large scale pretrained Mamba models indicates that they still lag behind Transformers on tasks that require retrieval and ICL abilities \citep{waleffe2024empirical}. Our work aims to bridge the gap between the positive ICL performance reported in toy settings and the limited ICL capabilities observed in real pretrained models. We show that when the data contains ``memorization'' shortcuts, where the model can memorize local patterns in-weights instead of leveraging in-context demonstrations, gated SSMs often converge to the ``memorization'' solution.

\paragraph{Copying and Retrieval with SSMs} The capabilities of SSMs to perform in-context retrieval has been studied extensively in the literature, benchmarking these models using Multi-Query Associative Recall \citep{arora2023zoology} and string copying \citep{jelassi2024repeat}. It has been shown, both empirically and theoretically, that SSMs are able to perform retrieval from relatively short sequences \citep{huang2025understanding, korenrecall}. However, retrieval and copying capabilities have been shown to deteriorate as sequence length increases \citep{jelassi2024repeat, ren2025exploring, zhan2025overcoming}. While we cannot expect SSMs to perform accurate retrieval across arbitrarily long sequences due to their fixed-size memory, it is unclear whether they perform well even in cases where memory size is not the bottleneck. For example, \cite{okpekpe2025recalling} show that SSMs are much more sensitive to choice of training hyper-parameters compared to Transformers when trained on retrieval tasks. We analyze a retrieval task, showing that Mamba models often converge first to a ``memorization'' solution, even when the architecture can perfectly solve the retrieval problem.

\paragraph{Gating and Recency Bias in SSMs} Most modern linear RNNs introduce a \emph{gating} mechanism, which decays the history state, often depending on the token input. This gating has been shown to stabilize training and improve generalization to longer sequence lengths, and is present in Gated Linear Attention \citep{yang2023gated}, Mamba \citep{gu2023mamba,dao2024transformers}, Gated DeltaNet \citep{yang2024gated}, and more. While some works have shown the benefits of the gating mechanism for outlier rejection \citep{li2025can} and in-context learning \citep{li2025gating}, other works expose some failures due to how gating is performed. For example, some works find that SSMs exhibit ``recency bias'' \citep{wang2024understanding} or local pattern shortcuts \citep{you2025revealing}, causing them to focus more on recent tokens and thus degrading performance on long context retrieval and causing information loss \cite{wang2025memmamba}. \cite{trockman2024mimetic} find that adopting ``mimetic'' initialization, which essentially disables the gating at initialization, improves learning of copying in a synthetic setting. 

\paragraph{Length Generalization in SSMs} 
The capability of SSMs to generalize to sequences longer than the ones they are trained on has been studied in different settings. \cite{gu2023mamba} demonstrate that Mamba achieves dramatically better length generalization performance compared to Transformers on an induction heads task, while other works show that the length generalization of SSMs can be improved through simple modifications \citep{ben2024decimamba, ruiz2025understanding}. More recently, \cite{malach2025infinity} show that SSMs equipped with tool-use achieve remarkable length generalization on various tasks. 
\cite{lu2025mamba} relate length generalization capabilities of Mamba to state convergence as input length increases. Our work establishes a trade-off between performance on long sequences and in-context retrieval capabilities.

\section{Detailed Theory} \label{app:proof}

In Section \ref{sec:theory} we gave a brief and informal version of our main theoretical findings. Here we give more details on the theory, including formal statements of all the results, the derivation and the proofs. 

\subsection{Short context retrieval - extended version}
Decompose $\rvw = (\rvw_x, v)$, where $\rvw_x \in \reals^d$ and $v \in \reals$. Define the following new variables:
\begin{align*}
    &\Delta = \ell - i + 1 \,, \\
    &\alpha_\Delta = \exp(-\Delta\, e^{\lambda}) \,,  \\
    &u = y(\rvx_i) (\rvw_x \cdot \rvx_i) \,,
\end{align*}
and let $\tilde{y} = \eta y$, where $\Pr(\eta = 1) = 1 - \epsilon$ and $\Pr(\eta = -1) = \epsilon$. In this fixed-position calculation we abbreviate $\alpha=\alpha_\Delta$. Then because our tokens are orthonormal, our model further simplifies to
\[f = y (\rvx_i) [(1 + \alpha) u + \alpha \eta v] \,.\]
A useful quantity to track is the signed margin for point $\rvx_i$, which is given by $z = \tilde{y} f$. The signed margin tells us the degree to which a model classifies $\rvx_i$; a large, positive signed margin suggests the model classifies the point very confidently and correctly. The signed margin simplifies to
\[z = (1 + \alpha) \eta u + \alpha v \,.\]
The signed margin naturally decomposes into two terms. We call $\delta_m = (1 + \alpha) u$ the \textit{memorization margin} and $\delta_r = \alpha v$ the \textit{retrieval margin}, so that $z = \eta \delta_m + \delta_r$.

The memorization margin represents the model's capacity to memorize the fixed label for $\rvx_i$. It derives from $\rvw_x$, the component of $\rvw$ that dots only $\rvx_i$, with no information about its context label. Its contribution to the signed margin is multiplied by $\eta$, so it is flipped when the context label mutates from the fixed assignment.

The retrieval margin represents the model's capacity to retrieve the in-context label. $v$ is the component of $\rvw$ that dots only the label coordinate, with no information about the actual token. In contrast to the memorization contribution, $\delta_r$ is not multiplied by $\eta$, so its classification is consistent regardless of whether the label mutates.

Averaging over the label noise $\eta$, the gradient flow equations are as follows. The factor $1/P$ appears only in the equation for $u$, because $u$ is specific to the point $\rvx_i$, while $v$ and $\lambda$ are shared parameters that receive symmetric contributions from all queried points.
\begin{align*}
    \dot{u} &= \frac{1}{P} (1 + \alpha) \left[ (1 - \epsilon) \sigma(-(\delta_m + \delta_r)) - \epsilon \sigma (\delta_m - \delta_r ) \right] \,, \\
    \dot{v} &= \alpha \left[ (1 - \epsilon) \sigma(-(\delta_m + \delta_r)) + \epsilon \sigma (\delta_m - \delta_r ) \right] \,, \\
    \dot{\alpha} &= (\alpha \log \alpha)^2 \left[ (1-\epsilon) (u + v) \sigma (-(\delta_m + \delta_r)) + \epsilon(v - u) \sigma (\delta_m - \delta_r) \right] \,.
\end{align*}

\begin{proposition}[Behavior at initialization] \label{prop:behavior_init}
    At initialization, the ratio of the initial growth rate of the retrieval margin to the initial growth rate of the memorization margin is
    \begin{equation} \label{eq:ssm_ratio_init}
    \frac{\dot{\delta}_r (0)}{\dot{\delta}_m (0)} = \frac{P \alpha^2}{(1 - 2\epsilon)(1 + \alpha)^2} \,.
    \end{equation}
\end{proposition}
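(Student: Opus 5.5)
The plan is to evaluate the gradient flow equations stated just above at the initialization $\rvw = \vzero$, $\lambda = \lambda_0$, and then form the ratio. Because $\rvw_x = \vzero$ and $v = 0$, we have $u(0) = 0$ and $v(0) = 0$. Consequently $\delta_m(0) = \delta_r(0) = 0$, and every sigmoid factor in the flow equations equals $\sigma(0) = 1/2$.

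First I substitute these values into the equations for $\dot u$ and $\dot v$. This gives
\begin{align*}
\dot u(0) &= \tfrac{1}{P}(1+\alpha)\,\tfrac{1}{2}\bigl[(1-\epsilon) - \epsilon\bigr] = \tfrac{(1+\alpha)(1-2\epsilon)}{2P}, \\
\dot v(0) &= \alpha\,\tfrac{1}{2}\bigl[(1-\epsilon) + \epsilon\bigr] = \tfrac{\alpha}{2}.
\end{align*}
Next I evaluate the equation for $\dot\alpha$. At initialization its bracket is $(1-\epsilon)(u+v)\sigma(\cdot) + \epsilon(v-u)\sigma(\cdot)$. Since $u(0) = v(0) = 0$, every term in the bracket vanishes, so $\dot\alpha(0) = 0$.

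Then I differentiate the margins with the product rule. For the retrieval margin, $\dot\delta_r = \dot\alpha\, v + \alpha \dot v$. For the memorization margin, $\dot\delta_m = \dot\alpha\, u + (1+\alpha)\dot u$. At $t=0$ the terms containing $\dot\alpha$ drop out, both because $\dot\alpha(0) = 0$ and because $u(0) = v(0) = 0$. This leaves $\dot\delta_r(0) = \alpha^2/2$ and $\dot\delta_m(0) = (1+\alpha)^2(1-2\epsilon)/(2P)$. Dividing the first by the second gives $\frac{P\alpha^2}{(1-2\epsilon)(1+\alpha)^2}$, as claimed. The ratio is well defined because $\epsilon < 1/2$ makes the denominator positive.

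The computation itself is routine. The main point needing care is justifying the gradient flow equations, which the paper states without derivation. The intended argument averages the logistic-loss gradient over $\eta$ and over the $P$ query points. Two facts drive it. First, orthonormality means $u$ is driven only by queries of $\rvx_i$, which accounts for the factor $1/P$. Second, $v$ and $\lambda$ collect identical contributions from all points, which by symmetry at initialization is the same as the single-point expression.

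If that derivation were required, I would differentiate $\log(1+e^{-z})$ with $z = (1+\alpha)\eta u + \alpha v$ with respect to $u$, $v$ and $\lambda$, and split on $\eta = \pm1$ with probabilities $1-\epsilon$ and $\epsilon$. Taking the expectation over $\eta$ produces the stated expressions. The vanishing of the $\dot\alpha$ contribution at $t=0$ does not depend on the exact form of $\dot\alpha$, since that contribution is multiplied by $u$ or $v$, both of which are zero at initialization.
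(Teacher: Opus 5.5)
Your proposal is correct and follows the paper's proof essentially verbatim. You evaluate the flow equations at $u(0)=v(0)=0$ to get $\dot u(0)=(1+\alpha)(1-2\epsilon)/(2P)$, $\dot v(0)=\alpha/2$ and $\dot\alpha(0)=0$, then scale by $(1+\alpha)$ and $\alpha$ to obtain the margin growth rates and take their ratio; your explicit product-rule check that the $\dot\alpha$ terms drop out is a harmless extra detail.
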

\begin{proof}
    At initialization $u(0)=v(0)=0$, so $\delta_m(0)=\delta_r(0)=0$ and $\dot{\alpha}(0)=0$. The gradient flow equations give
    \[
    \dot{u}(0)=\frac{(1+\alpha)(1-2\epsilon)}{2P}
    \quad\text{and}\quad
    \dot{v}(0)=\frac{\alpha}{2}.
    \]
    Since $\delta_m=(1+\alpha)u$ and $\delta_r=\alpha v$, the initial margin growth rates are
    \[
    \dot{\delta}_m(0)=\frac{(1+\alpha)^2(1-2\epsilon)}{2P}
    \quad\text{and}\quad
    \dot{\delta}_r(0)=\frac{\alpha^2}{2}.
    \]
    Taking the ratio proves the claim.
\end{proof}

Proposition \ref{prop:behavior_init} suggests the behavior of the model at initialization. For large $\alpha$, retrieval behavior dominates at initialization, and the model is able to quickly learn to retrieve the in-context label, solving the task. However, for small $\alpha$, memorization dominates at initialization, and the model is insensitive to the label. This corresponds to our intuition that strong gating dampens in-context information, favoring memorization behavior. For large $P$ and small $\alpha$, \eqref{eq:ssm_ratio_init} suggests an approximate critical scale $\alpha^* \approx \sqrt{\frac{1 - 2\epsilon}{P}}$ such that if at initialization, $\alpha > \alpha^*$, the model favors retrieval, whereas if $\alpha < \alpha^*$, the model favors memorization.

If a model favors memorization initially, is it permanently trapped in such behavior, or will it uncover the retrieval solution eventually? In the remainder of this section, we establish that memorization is indeed a \textit{plateau} rather than a sink, but that escaping this plateau may take significant time.

At initialization, our gradient flow equations suggest that $\dot{u}(0) = \mathcal{O}(1 / P)$, while $\dot{v}(0) = \mathcal{O}(\alpha)$ and $\dot{\alpha}(0) = 0$. If $\alpha \ll \frac{1}{P}$, then $v$ remains effectively $0$ while $u$ grows to a finite quantity. Hence, we might reasonably model our learning dynamics by assuming that the SSM quickly attains a perfect memorization solution, then observe how $v$ and $\alpha$ evolve.

\begin{lemma}[Perfect memorization] \label{prop:perfect_mem}
    Fix $v = 0$ and $\alpha$. Then the expected loss $\E_\eta[\Ls(f,\tilde{y})]$ has a unique minimizer at
    \begin{equation*}
        u^* = \frac{\kappa}{1 + \alpha}\,,
    \end{equation*}
    where $\kappa = \log \frac{1 - \epsilon}{\epsilon} = \sigma^{-1}(1 - \epsilon)$.
\end{lemma}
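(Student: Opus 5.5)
With $v=0$ and $\alpha$ held fixed, the signed margin reduces to $z=\eta\,\delta_m=(1+\alpha)\eta u$. The expected loss is therefore a function of the single scalar $u$. The plan is to write it explicitly, show it is strictly convex and coercive, and solve the first-order condition. Using $\log(1+e^{-z})$ and averaging over $\eta$, we get
\[
g(u)\;=\;\E_\eta[\Ls(f,\tilde{y})]\;=\;(1-\epsilon)\log\bigl(1+e^{-(1+\alpha)u}\bigr)+\epsilon\log\bigl(1+e^{(1+\alpha)u}\bigr).
\]
Substituting $s=(1+\alpha)u$ is a bijection of $\reals$, since $1+\alpha>0$. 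It therefore suffices to minimize $h(s)=(1-\epsilon)\log(1+e^{-s})+\epsilon\log(1+e^{s})$ and then divide by $1+\alpha$.

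Next, differentiate. Using $\frac{d}{ds}\log(1+e^{-s})=-\sigma(-s)$ and $\frac{d}{ds}\log(1+e^{s})=\sigma(s)$, we have $h'(s)=-(1-\epsilon)\sigma(-s)+\epsilon\sigma(s)$. Rewriting $\sigma(-s)=1-\sigma(s)$ gives $h'(s)=\sigma(s)-(1-\epsilon)$. This matches the $\dot u$ equation above at $\delta_r=0$, up to the positive factor $(1+\alpha)/P$, which is a useful consistency check. Also $h''(s)=\sigma(s)(1-\sigma(s))>0$, so $h$ is strictly convex.

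The critical point solves $\sigma(s)=1-\epsilon$. Since $\epsilon\in(0,1/2)\subset(0,1)$, the value $1-\epsilon$ lies in the range $(0,1)$ of $\sigma$, so a solution exists. It is $s^*=\sigma^{-1}(1-\epsilon)=\log\frac{1-\epsilon}{\epsilon}=\kappa$, and it is unique because $\sigma$ is strictly increasing. Strict convexity makes this critical point the unique global minimizer. Alternatively, $h'<0$ for $s<\kappa$ and $h'>0$ for $s>\kappa$ gives the same conclusion without invoking coercivity. Undoing the substitution yields $u^*=\kappa/(1+\alpha)$.

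There is no real obstacle. The only points needing care are the sign conventions when averaging over $\eta$ (the $\eta=-1$ branch contributes $\log(1+e^{+s})$) and the assumption $\epsilon\in(0,1)$, which guarantees existence. For $\epsilon=0$ the infimum would not be attained. We also note that $\kappa>0$ because $\epsilon<1/2$, so the minimizer indeed corresponds to memorizing the correct label.
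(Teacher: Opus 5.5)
Your proof is correct and follows the paper's argument: average the logistic loss over $\eta$, differentiate to get $\sigma((1+\alpha)u)-(1-\epsilon)$ (up to the factor $1+\alpha$), and use strict positivity of the second derivative to conclude that the unique stationary point $(1+\alpha)u=\kappa$ is the unique minimizer. The substitution $s=(1+\alpha)u$ is only cosmetic. One small slip in your side remark: at $\delta_r=0$ one has $\dot u=-\tfrac{1+\alpha}{P}\,h'(s)$, so the factor is negative, as it should be for gradient flow.
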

\begin{proof}
    With $v=0$, the signed margin is $z=\eta(1+\alpha)u$. The expected loss as a function of $u$ is
    \[
    L(u)=(1-\epsilon)\log(1+\exp(-(1+\alpha)u))
    +\epsilon\log(1+\exp((1+\alpha)u)).
    \]
    Differentiating,
    \[
    L'(u)=(1+\alpha)\left[\sigma((1+\alpha)u)-(1-\epsilon)\right].
    \]
    Thus the only stationary point satisfies $\sigma((1+\alpha)u)=1-\epsilon$, or equivalently $(1+\alpha)u=\kappa$. Moreover,
    \[
    L''(u)=(1+\alpha)^2\sigma((1+\alpha)u)(1-\sigma((1+\alpha)u))>0,
    \]
    so this stationary point is the unique minimizer.
\end{proof}

Once the model attains the memorization manifold suggested by Proposition \ref{prop:perfect_mem}, namely $\delta_m=(1+\alpha)u=\kappa$, the corresponding time derivatives for $v$ and $\alpha$ are
\begin{align*}
    \dot{v} \big|_{\delta_m=\kappa} &= \alpha \, \phi(\delta_r) \,, \\
    \dot{\alpha} \big|_{\delta_m=\kappa} &= (\alpha \log \alpha)^2 \left[v \, \phi (\delta_r) + u^*(\alpha) \, \psi (\delta_r)\right]\,,
\end{align*}
where $u^*(\alpha)=\kappa/(1+\alpha)$ and
\begin{align*}
    \phi(\delta) &= (1-\epsilon) \sigma(-(\kappa + \delta)) + \epsilon \sigma(\kappa - \delta)\,, \\
    \psi(\delta) &= (1-\epsilon) \sigma(-(\kappa + \delta)) - \epsilon \sigma(\kappa - \delta)\,.
\end{align*}

\begin{lemma}[Reduced dynamics on the memorization plateau] \label{lem:reduced_dynamics}
    Fix $\epsilon\in(0,1/2)$. There are constants $c,C>0$, depending only on $\epsilon$, such that for all $\delta\in[0,\kappa]$,
    \[
        c \leq \phi(\delta)\leq C
        \quad\text{and}\quad
        |\psi(\delta)|\leq C|\delta|\,.
    \]
    Consequently, on the memorization manifold, whenever $0\leq\delta_r\leq\kappa$,
    \[
        \dot{\alpha}
        =
        (\alpha\log\alpha)^2\left[v\phi(\delta_r)+\mathcal{O}(\alpha |v|)\right].
    \]
\end{lemma}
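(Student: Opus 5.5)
The argument splits into two parts. First we bound $\phi$ and $\psi$ on the compact interval $[0,\kappa]$. Then we substitute these bounds into the expression for $\dot\alpha$ on the memorization manifold. Throughout we use $\kappa=\log\frac{1-\epsilon}{\epsilon}>0$, which holds because $\epsilon<1/2$. This gives the key identity $\epsilon\,e^{\kappa}=1-\epsilon$, equivalently $\sigma(\kappa)=1-\epsilon$.

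\emph{Bounds on $\phi$.} Both summands of $\phi$ are products of positive constants with values of $\sigma$, so $\phi$ is continuous and strictly positive on $[0,\kappa]$. Compactness then yields $0<c\le\phi\le C$ with constants depending only on $\epsilon$. Explicit bounds are also available:
\begin{itemize}
\item the upper bound $\phi\le 1$ holds trivially;
\item for the lower bound, $\phi(\delta)\ge\epsilon\,\sigma(\kappa-\delta)\ge\epsilon\,\sigma(0)=\epsilon/2$ for $\delta\le\kappa$.
\end{itemize}

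\emph{Bound on $\psi$.} The crucial observation is that $\psi(0)=0$:
\[
(1-\epsilon)\sigma(-\kappa)=(1-\epsilon)\epsilon=\epsilon\,\sigma(\kappa).
\]
This is exactly the stationarity condition of Lemma~\ref{prop:perfect_mem}. Since $\psi$ is smooth, the mean value theorem gives $|\psi(\delta)|\le \sup_{[0,\kappa]}|\psi'|\,|\delta|$. Moreover,
\[
|\psi'|\le (1-\epsilon)\sigma'+\epsilon\,\sigma'\le \tfrac14,
\]
so we may take $C=\max(1,1/4)$, with the lower constant $c=\epsilon/2$.

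\emph{Consequence for $\dot\alpha$.} On the manifold $\delta_m=\kappa$ we have
\[
\dot\alpha=(\alpha\log\alpha)^2\bigl[v\phi(\delta_r)+u^*(\alpha)\psi(\delta_r)\bigr].
\]
Two estimates control the second term:
\begin{itemize}
\item $|u^*(\alpha)|=\kappa/(1+\alpha)\le\kappa$;
\item $|\psi(\delta_r)|\le C|\delta_r|=C\alpha|v|$, since $\delta_r=\alpha v$.
\end{itemize}
Together these give $|u^*\psi(\delta_r)|\le C\kappa\,\alpha|v|=\mathcal{O}(\alpha|v|)$, with the implied constant depending only on $\epsilon$. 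This is the stated form.

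The only genuinely non-routine step is noticing the cancellation $\psi(0)=0$, which comes from the memorization optimum. Without it the $u^*\psi$ term would be $\mathcal{O}(1)$ and would dominate $v\phi$. Everything else is elementary calculus.
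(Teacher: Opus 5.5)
Your proposal is correct and follows the same route as the paper: continuity and positivity on the compact interval $[0,\kappa]$ for $\phi$, the cancellation $\psi(0)=0$ (from $\sigma(\kappa)=1-\epsilon$) plus the mean value theorem for $\psi$, and then $u^*(\alpha)\le\kappa$ together with $\delta_r=\alpha v$ to bound the $u^*\psi$ term. Your explicit constants $c=\epsilon/2$ and $C=1$ (using $\sigma'\le 1/4$) are a valid quantitative sharpening of the paper's compactness argument.
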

\begin{proof}
    We have $\phi(0)=2\epsilon(1-\epsilon)>0$ and $\psi(0)=0$, using $\sigma(\kappa)=1-\epsilon$ and $\sigma(-\kappa)=\epsilon$. The bounds on $\phi$ follow because $\phi$ is continuous and strictly positive on the compact interval $[0,\kappa]$. The bound on $\psi$ follows from the mean-value theorem and boundedness of $\psi'$ on the same interval. Since $u^*(\alpha)\leq \kappa$ and $\delta_r=\alpha v$, the approximation for $\dot{\alpha}$ follows from the exact plateau equation above.
\end{proof}

Thus, to leading order in $\alpha$, we study the reduced dynamics
\begin{align}
    \dot{v} &= \alpha \, \phi(\delta_r) \,, \label{eq:v_reduced_dyn} \\
    \dot{\alpha} &= (\alpha \log \alpha)^2 v \, \phi (\delta_r)\,. \label{eq:a_reduced_dyn}
\end{align}
The omitted term can affect constants and the late part of the trajectory, but the early portion of the plateau dominates the asymptotic escape time below.
Note that $\dot{v} > 0$ and $\dot{\alpha} > 0$ after $v$ becomes positive. Therefore, we expect that the model will transition eventually from memorization to retrieval. One way of characterizing this transition is by measuring the time at which $\delta_r > \delta_m$. When this is true, the signed margin shifts to retrieval rather than memorization. Our next proposition specifies the time scale at which this happens under the reduced dynamics.

\begin{proposition}[Escape time from the memorization plateau] \label{prop:escape_time}
    Fix $\epsilon\in(0,1/2)$ and let $\kappa=\log((1-\epsilon)/\epsilon)$. Initialize $v(0)=0$ and $\alpha(0)=\alpha_0\in(0,1)$. Let $T$ be the first time at which $\delta_r(T)=\alpha(T)v(T)$ reaches the memorization margin $\kappa$. Then, under the dynamics given by \eqref{eq:v_reduced_dyn} and \eqref{eq:a_reduced_dyn}, as $\alpha_0\to 0$,
\begin{equation} \label{eq:ssm_escape_time}
    T = \Theta\left( \frac{1}{\alpha_0 \log (1/\alpha_0)}\right) \,.
\end{equation}
\end{proposition}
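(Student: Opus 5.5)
The plan is to remove time from the reduced system, solve the resulting orbit equation exactly, and then write $T$ as a single explicit integral. On $[0,T]$ we have $0\le\delta_r\le\kappa$, by the definition of $T$ and because $v$ and $\alpha$ are nondecreasing. Lemma~\ref{lem:reduced_dynamics} therefore gives $c\le\phi(\delta_r)\le C$ there. In particular $\dot v\ge c\alpha_0>0$, so $v$ is strictly increasing and can serve as the time variable. Dividing \eqref{eq:a_reduced_dyn} by \eqref{eq:v_reduced_dyn} cancels $\phi$:
\[
\frac{d\alpha}{dv}=\alpha(\log\alpha)^2\,v .
\]
Set $s=\log(1/\alpha)>0$ and $s_0=\log(1/\alpha_0)$. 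The orbit equation becomes $-ds/s^2=v\,dv$, which integrates to
\[
\frac1s=\frac1{s_0}+\frac{v^2}{2},\qquad\text{i.e.}\qquad \alpha(v)=\exp\!\Big(-\frac{s_0}{1+s_0v^2/2}\Big).
\]
Because $dt=dv/(\alpha\phi)$ and $\phi\in[c,C]$, we get
\[
T=\Theta\Big(\int_0^{v_T}\frac{dv}{\alpha(v)}\Big)=\Theta\Big(\int_0^{v_T}\exp\!\Big(\frac{s_0}{1+s_0v^2/2}\Big)dv\Big).
\]
Here $v_T$ is the first $v$ with $\alpha(v)v=\kappa$, and $\kappa>0$ since $\epsilon<1/2$.

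Next we locate $v_T$. For all $s_0$ we have $\alpha(v)\ge e^{-2/v^2}$, and $\alpha<1$. Hence $v_T>\kappa$, and $v_T\le V$, where $V$ is the constant (depending only on $\epsilon$) solving $Ve^{-2/V^2}=\kappa$. So $v_T\in[\kappa,V]$ uniformly as $\alpha_0\to0$. The same argument shows that $T$ exists: $v$ grows at least linearly and $\alpha v$ eventually exceeds $\kappa$.

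The integral is dominated by a Gaussian-like peak of height $1/\alpha_0$ and width $1/s_0$ near $v=0$. The upper bound splits $[0,v_T]$ at $v_1=\sqrt{2/s_0}$.
\begin{itemize}
\item On $[0,v_1]$, put $x=s_0v^2/2\in[0,1]$ and use $1/(1+x)\le 1-x/2$. The exponent is at most $s_0-s_0^2v^2/4$, so this piece is at most $\alpha_0^{-1}\int_0^\infty e^{-s_0^2v^2/4}dv=\sqrt{\pi}/(\alpha_0 s_0)$.
\item On $[v_1,v_T]$, we have $x\ge1$, so the exponent is at most $s_0/2$. This piece is at most $V\alpha_0^{-1/2}=o\big(1/(\alpha_0\log(1/\alpha_0))\big)$.
\end{itemize}
For the lower bound, restrict to $[0,1/s_0]$, which lies inside $[0,v_T]$ because $1/s_0<\kappa$ for small $\alpha_0$. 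There the exponent is at least $s_0/(1+1/(2s_0))\ge s_0-1/2$, so the integral is at least $e^{-1/2}/(\alpha_0 s_0)$. Combining the two bounds gives $T=\Theta\big(1/(\alpha_0\log(1/\alpha_0))\big)$.

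The main obstacle is this last integral estimate: seeing that the $\log(1/\alpha_0)$ factor comes from the peak width $1/s_0$, and checking that the tail, where $\alpha$ has already grown, contributes only $O(\alpha_0^{-1/2})$. The remaining points are bookkeeping:
\begin{itemize}
\item keeping $\delta_r\in[0,\kappa]$ on all of $[0,T]$, so that the two-sided bounds on $\phi$ apply;
\item the uniform bound $v_T\le V$.
\end{itemize}
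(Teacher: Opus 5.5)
Your proposal is correct and follows the paper's proof essentially step for step. Both use $v$ as a clock and the closed-form orbit $1/s=1/s_0+v^2/2$, bracket $v_T\in[\kappa,V]$, split the integral at $\sqrt{2/s_0}$ for the upper bound, and take the lower bound on $[0,1/s_0]$; your lower-bound constant $e^{-1/2}$ is even slightly sharper than the paper's $e^{-1}$.
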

\begin{proof}
    Let $b=\log(1/\alpha)>0$ and $b_0=\log(1/\alpha_0)$, so $\alpha_0\to0$ is equivalent to $b_0\to\infty$. Before the hitting time, the retrieval margin satisfies $\delta_r=\alpha v\in[0,\kappa]$, and therefore $\dot v=\alpha\phi(\delta_r)>0$. Hence we may use $v$ as a clock. Dividing the two reduced equations gives
    \[
        \frac{d\alpha}{dv}=\alpha(\log\alpha)^2v.
    \]
    Equivalently, $db/dv=-b^2v$, and hence
    \[
        \frac{1}{b(v)}=\frac{1}{b_0}+\frac{v^2}{2},
        \qquad
        \alpha(v)=\exp\left(-\frac{1}{b_0^{-1}+v^2/2}\right).
    \]

    Let $v_T=v(T)$. We first check that the value of $v$ at the transition is $\Theta(1)$. The retrieval margin $\delta_r(v)=\alpha(v)v$ is increasing because both $\alpha(v)$ and $v$ are increasing. Since $\alpha(v)\leq 1$, reaching $\delta_r(v_T)=\kappa$ requires $v_T\geq \kappa$. Conversely, choose a constant $V$ such that $V\exp(-2/V^2)>\kappa$. From
    \[
        \frac{1}{b(V)}=\frac{1}{b_0}+\frac{V^2}{2}\geq \frac{V^2}{2},
    \]
    we get $b(V)\leq 2/V^2$, and hence $\alpha(V)=e^{-b(V)}\geq \exp(-2/V^2)$. Thus the retrieval margin at $v=V$ satisfies $\delta_r(V)=\alpha(V)V>\kappa$, so the hitting time must occur by $v=V$ and $v_T\leq V$. Therefore $v_T$ is bounded above and below by positive constants depending only on $\epsilon$, uniformly as $\alpha_0\to0$.

    On the whole interval before the transition, the retrieval margin satisfies $0\leq\delta_r(v)=\alpha(v)v\leq\kappa$. Lemma \ref{lem:reduced_dynamics} therefore lets us treat $\phi(\delta_r)$ as a constant up to multiplicative factors. Using $dt=dv/\dot v$ and $\dot v=\alpha(v)\phi(\delta_r)$,
    \[
        T
        =
        \int_0^{v_T}\frac{dv}{\alpha(v)\phi(\alpha(v)v)}
        =
        \Theta\left(\int_0^{v_T}\exp\left(\frac{b_0}{1+b_0v^2/2}\right)\,dv\right).
    \]
    
    Because $v_T$ is bounded between positive constants, it remains only to estimate this integral.

    For the lower bound, for all sufficiently large $b_0$, the interval $[0,1/b_0]$ lies inside $[0,v_T]$. On this interval, writing $x=b_0v^2/2$, we have $x\leq 1/(2b_0)$. Therefore
    \[
        b_0-\frac{b_0}{1+x}
        =
        \frac{b_0x}{1+x}
        \leq
        b_0x
        \leq
        \frac{1}{2}.
    \]
    In particular,
    \[
        \frac{b_0}{1+b_0v^2/2}\geq b_0-1.
    \]
    Hence the integral is at least $e^{b_0-1}/b_0$.

    For the upper bound, use $v_T\leq V$ and integrate over $[0,V]$. On $0\leq v\leq \sqrt{2/b_0}$, using $(1+x)^{-1}\leq 1-x/2$ for $x\in[0,1]$ gives
    \[
        \frac{b_0}{1+b_0v^2/2}
        \leq
        b_0\left(1-\frac{1}{2}\cdot\frac{b_0v^2}{2}\right)
        =
        b_0-\frac{b_0^2v^2}{4},
    \]
    so this part contributes at most
    \[
        e^{b_0}\int_0^\infty \exp(-b_0^2v^2/4)\,dv
        =
        \mathcal{O}(e^{b_0}/b_0).
    \]
  On the remaining interval $\sqrt{2/b_0}\leq v\leq V$, the denominator $1+b_0v^2/2$ is at least $2$, so the exponent is at most $b_0/2$. This tail contributes at most $V e^{b_0/2}=o(e^{b_0}/b_0)$. Thus the integral is $\Theta(e^{b_0}/b_0)$. Since $e^{b_0}=1/\alpha_0$, the claimed scaling follows.
\end{proof}

In this way, we establish that a model is not necessarily trapped in a memorization solution forever, but given sufficient time, it will also learn the full retrieval solution. However, the required time may be immense. Since $\alpha = \exp(- m e^{\lambda})$, $T$ may be exponential in the position of the token $m$, or indeed \textit{double} exponential in the scale of the base initialization for $\lambda$. Hence, to learn a retrieval task for which the relevant tokens are far from the query, an initialization with very small $\lambda$ (and therefore $\alpha$ close to 1) is essential to perform well in a reasonable amount of time.

\subsection{Long context generalization - extended version}

Let $\alpha_j = \exp(-j e^\lambda)$ denote the decay weight at distance $j$ from the query, and let $\Delta=\ell-t^*+1$ be the distance from the query to the context occurrence of $\rvx_q$. In our long-context setting, we prioritize studying retrieval, and allow the label mapping $\tilde{y}( \cdot)$ to be random (equivalent to setting $\epsilon = 0.5$). We also drop our orthogonality assumption and allow $P \rightarrow \infty$. 

Since labels are i.i.d. random, a model trained in this setting cannot memorize a stable mapping from inputs to labels, and must rely on the retrieval signal in the final coordinate of $\rvh$. We might therefore expect the readout $\rvw = (\rvw_x, v)$ to have the form $\rvw_x \approx 0$ and $v>0$. We formalize this intuition in the following proposition.

\begin{proposition}[Pure retrieval solution] \label{prop:ssm_retrieval_solution}
    Consider the pure retrieval distribution used in the long-context setting: the context length $\ell$ and dimension $d$ are finite, the inputs are iid $\rvx_i \sim \mathcal{N}(\vzero, \rmI/d)$, the context labels $\tilde{y}_i$ are iid uniform $\pm 1$ variables independent of the inputs, and the query token equals one context input $\rvx_{i^*}$ with target $\tilde{y}_{i^*}$. Let $\bar{\Ls}(\rvw_x,v,\lambda)=\E[\Ls(f,\tilde{y}_{t^*})]$ be the expected logistic loss, and run gradient flow on $\bar{\Ls}$ initialized with $\rvw_x(0)=\vzero$ and $v(0)=0$. Then for every training time $t > 0$,
    \[
        \rvw_x(t)=\vzero
        \quad\text{and}\quad
        v(t) > 0.
    \]
\end{proposition}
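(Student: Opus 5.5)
Our approach is a symmetry argument for $\rvw_x$ combined with a sign argument for $\dot v$. The distribution is invariant under the label flip $\tilde{y}_j \mapsto -\tilde{y}_j$ applied to all context labels simultaneously, which also flips the target $\tilde{y}_{i^*}$. Under this flip, $\rvh = (\rvh_x, h_y)$ transforms as follows. The $x$-part is $\rvh_x = \rvx_q + \sum_t \alpha_{\ell-t+1}(\tilde{\rvx}_q^\intercal\tilde{\rvx}_t)\rvx_t$, and the inner products $\tilde{\rvx}_q^\intercal\tilde{\rvx}_t = \rvx_q^\intercal\rvx_t$ do not involve labels, since the query label coordinate is $0$. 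The label coordinate is $h_y = \sum_t \alpha_{\ell-t+1}(\rvx_q^\intercal\rvx_t)\tilde{y}_t$. Hence $\rvh_x$ is invariant and $h_y$ flips sign. Writing $f = \rvw_x^\intercal\rvh_x + v h_y$, the per-example gradient of the logistic loss with respect to $\rvw_x$ is $-\tilde{y}\,\sigma(-\tilde{y}f)\,\rvh_x$.

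We first show that the subspace $\{\rvw_x=\vzero\}$ is invariant under gradient flow. On this subspace $f = v h_y$, so under the flip we have $\tilde{y}f \mapsto \tilde{y}f$ while $\tilde{y}\rvh_x \mapsto -\tilde{y}\rvh_x$. The $\rvw_x$-gradient is therefore an odd function under a measure-preserving involution, and its expectation vanishes. The $\lambda$-gradient and $v$-gradient restricted to the subspace are well defined. The restricted flow in $(v,\lambda)$ thus gives a solution of the full flow with $\rvw_x \equiv \vzero$, and uniqueness of ODE solutions forces the actual trajectory to coincide with it. For uniqueness, the expected loss is smooth with locally Lipschitz gradient: moments of Gaussians are finite, and $\sigma$ and its derivative are bounded, so dominated convergence applies.

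Next we handle $v$. With $\rvw_x=\vzero$, the equation is $\dot v = \E[\tilde{y}h_y\,\sigma(-v\tilde{y}h_y)]$. Set $g = \tilde{y}_{i^*}h_y = \alpha_\Delta\|\rvx_q\|^2 + \sum_{t\ne t^*}\alpha_{\ell-t+1}(\rvx_q^\intercal\rvx_t)\tilde{y}_t\tilde{y}_{i^*}$. At $v=0$ we get $\dot v = \tfrac12\E[g] = \tfrac12\alpha_\Delta\E\|\rvx_q\|^2 > 0$, because the distractor terms have mean zero. This holds since $\alpha_\Delta>0$ for any finite $\lambda$. So $v$ becomes positive immediately.

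To keep $v>0$ for all $t>0$, it suffices to show $\dot v>0$ whenever $v=0$ and, more robustly, $\dot v>0$ for all $v\ge 0$ at any $\lambda$. Write $\dot v = \E[g\,\sigma(-vg)]$ and decompose $g = s + n$, where $s = \alpha_\Delta\|\rvx_q\|^2 > 0$ and $n$ is symmetric given $s$, because the distractor labels are independent signs. Pairing $n$ with $-n$ gives $\E[g\sigma(-vg)\mid s] = \tfrac12\E[(s+n)\sigma(-v(s+n)) + (s-n)\sigma(-v(s-n))]$.

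The main obstacle is showing this paired expression is positive for $v>0$. The function $x\mapsto x\sigma(-vx)$ is not monotone, so positivity is not immediate. A fallback avoids the issue: we only need $v$ never to return to $0$. Since $\dot v>0$ holds strictly at $v=0$ for every $\lambda$, and this is continuous in $\lambda$, a first-hitting-time argument applies. At a first time $t_1>0$ with $v(t_1)=0$, we would need $\dot v(t_1)\le 0$, which contradicts $\dot v(t_1) = \tfrac12\alpha_\Delta(\lambda(t_1))\E\|\rvx_q\|^2 > 0$. This requires $\lambda$ to stay finite on finite time intervals, which holds because the flow is locally Lipschitz and the gradient in $\lambda$ is bounded by Gaussian moments. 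We would adopt this fallback as the main proof, with the pairing computation as an optional strengthening.
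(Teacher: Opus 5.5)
Your proof is correct. It differs from the paper's in the symmetry used to kill the $\rvw_x$-gradient.

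\textbf{The $\rvw_x$ part.} The paper invokes rotational invariance of the Gaussian inputs. Applying an orthogonal $\rmU$ to every input maps $\rvh_x\mapsto\rmU\rvh_x$ while fixing $h_y$ and the target. Hence $\bar{\Ls}(\rvw_x,v,\lambda)=\bar{\Ls}(\rmU\rvw_x,v,\lambda)$, and the gradient at $\rvw_x=\vzero$ is a vector fixed by every orthogonal map, so it is zero. You instead use the involution that flips all context labels. It fixes $\rvh_x$, because the query's label slot is $0$ and so the inner products never see labels. It negates $h_y$ and the target, which makes the per-example $\rvw_x$-gradient odd on $\{\rvw_x=\vzero\}$.

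The two symmetries need different hypotheses. The paper's argument really only needs $\rmU=-\rmI$, i.e.\ sign symmetry of the inputs, and never uses symmetry of the label law. Yours needs labels that are symmetric and independent of the inputs, but assumes nothing about the input distribution: no Gaussianity or isotropy.

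\textbf{The $v$ part.} Here the two proofs coincide: both rest on $\dot v=\tfrac12\E[\alpha_\Delta\|\rvx_q\|^2]>0$ whenever $v=0$. Your first-hitting-time argument, ODE uniqueness step, and check that $\lambda$ stays finite make explicit what the paper compresses into ``cannot cross from positive to negative.'' For the last point, the clean bound is $|\partial_\lambda\alpha_j|=je^{\lambda}e^{-je^{\lambda}}\leq e^{-1}$ uniformly in $\lambda$. This gives $|\dot\lambda|\leq C|v|$ while $|\dot v|\leq C$, so $v$ grows at most linearly and $\lambda$ at most quadratically.

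\textbf{One correction.} Drop the ``optional strengthening'' rather than leaving it open: the claim that $\dot v>0$ for all $v\geq 0$ is false whenever $\ell\geq 2$. For fixed $\lambda$, $\E[g\,\sigma(-vg)]\to\E[g\,\mathbf{1}\{g<0\}]<0$ as $v\to\infty$. This holds because the Gaussian distractor sum makes $g<0$ with positive probability. So the logistic loss in $v$ has a finite minimizer, and $\dot v$ turns negative beyond it. Your fallback argument does not need the strengthening.
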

\begin{proof}
    Write $\rvh=(\rvh_x,g)$, where $g$ is the final coordinate. For any orthogonal matrix $\rmU\in\reals^{d\times d}$, applying $\rmU$ to every input vector maps $\rvh_x$ to $\rmU\rvh_x$, while preserving all inner products and leaving $g$ and the target $\tilde{y}_{t^*}$ unchanged. The Gaussian input distribution is rotationally invariant, so the expected loss is also invariant under $\rvw_x\mapsto \rmU\rvw_x$:
    \begin{equation} \label{eq:invariance}
        \bar{\Ls}(\rvw_x,v,\lambda)
        =
        \bar{\Ls}(\rmU\rvw_x,v,\lambda).
    \end{equation}
    Let $\rva=\nabla_{\rvw_x}\bar{\Ls}(\vzero,v,\lambda)$. Differentiating \eqref{eq:invariance} at $\rvw_x=\vzero$ gives $\rva=\rmU^\intercal\rva$ for every orthogonal $\rmU$. Thus $\rva$ is unchanged by every rotation or reflection. The only vector with this property is $\vzero$. Hence $\nabla_{\rvw_x}\bar{\Ls}(\vzero,v,\lambda)=\vzero$, so gradient flow initialized at $\rvw_x(0)=\vzero$ preserves $\rvw_x(t)=\vzero$.

    Along this trajectory, the model output is $vg$. Therefore
    \[
        \dot v
        =
        \E\left[\tilde{y}_{t^*}g\,\sigma(-\tilde{y}_{t^*}vg)\right].
    \]
    At $v=0$,
    \[
        \dot v
        =
        \frac{1}{2}\E[\tilde{y}_{t^*}g]
        =
        \frac{1}{2}\E[\alpha_\Delta\|\rvx_{t^*}\|^2]
        >0,
    \]
    because the distractor terms in $g$ have mean zero. Thus $v(t)$ moves positive immediately and cannot cross from positive to negative.
\end{proof}

In the remainder of this section, we analyze the error of a model in which $\rvw_x = 0$ and $v = 1$. As Proposition \ref{prop:ssm_retrieval_solution} suggests, this captures the classifier learned by gradient flow on the expected loss, up to an irrelevant positive rescaling of the output. We are particularly interested in the model's capacity for length generalization, so we study the case in which $\ell \rightarrow \infty$. 

Let us denote the final coordinate of $\rvh$ as $g$ (or equivalently, the output of the model when setting $\rvw_x = 0$ and $v=1$). From the simplified model's definition, we see that
\[g = \alpha_\Delta \|\rvx_q\|^2 \tilde{y}(\rvx_q) + \sum_{t \neq t^*} \alpha_{\ell - t + 1} (\rvx_q^\intercal \rvx_t) \tilde{y}(\rvx_t) \,.\]
Conditioned on a query $\rvx_q$ and label $\tilde{y}(\rvx_q)$, we observe that $g$ has the following distribution with respect to randomness in the input tokens:
\[g \,|\,\rvx_q, \tilde{y}(\rvx_q) = \mathcal{N}\left( \alpha_\Delta \|\rvx_q\|^2 \tilde{y}(\rvx_q) , \frac{1}{d} \|\rvx_q\|^2 (S_2(\ell, \lambda) - \alpha_\Delta^2 )\right) \,,\]
where $S_2(\ell, \lambda) = \sum_{j=1}^\ell \alpha_j^2$.

Intuitively, we see that the margin on $\rvx_q$ is weighted by $\alpha_\Delta$. For weaker decay, the weight $\alpha_\Delta$ is larger. At the same time, $g$ is subject to variance that increases with $S_2$. With weaker decay, $S_2$ also grows larger, drowning out the margin. Hence, success through this coordinate requires selecting a moderate decay that promotes a large margin while controlling its variance. This is in contrast to the picture in Section \ref{sec:short_retrieval}, where weaker decay was monotonically better.

To clarify this intuition, we rely on the following two lemmas. In Lemma \ref{lem:bayes_err}, we establish the Bayes error rate for the scalar statistic $g$. In Lemma \ref{lem:effective_length_scale}, we derive an asymptotic characterization of $S_2(\ell, \lambda)$, yielding an ``effective context size" as a function of our gating parameter.

\begin{lemma}[Label-coordinate Bayes error rate] \label{lem:bayes_err}
Given the scalar statistic $g$ for a query $\rvx_q$, the Bayes-optimal predictor based on $g$ is $\mathrm{sign}(g)$. The conditional error is
\[\Pr(\mathrm{sign}(g) \neq \tilde{y}(\rvx_q) \,|\, \rvx_q ) = \Phi\left( -\alpha_\Delta \|\rvx_q\| \sqrt{\frac{d}{S_2 (\ell, \lambda)-\alpha_\Delta^2}} \right) \,,\]
where $\Phi$ is the standard Gaussian CDF.
\end{lemma}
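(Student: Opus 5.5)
The argument has two parts: first the Bayes-optimality of $\mathrm{sign}(g)$, then the explicit Gaussian error. I will use the conditional law of $g$ stated just before the lemma. Conditioned on $\rvx_q$ and $\tilde{y}(\rvx_q)$, it is Gaussian with mean $\mu\,\tilde{y}(\rvx_q)$ and variance $s^2$. Here $\mu=\alpha_\Delta\|\rvx_q\|^2$ and $s^2=\frac{1}{d}\|\rvx_q\|^2(S_2(\ell,\lambda)-\alpha_\Delta^2)$. Both $\mu$ and $s^2$ depend only on $\rvx_q$, not on the label. I will first justify this law in a line. Each distractor term $\alpha_{\ell-t+1}(\rvx_q^\intercal\rvx_t)\tilde{y}(\rvx_t)$ with $t\neq t^*$ is, given $\rvx_q$, a symmetric Gaussian with variance $\alpha_{\ell-t+1}^2\|\rvx_q\|^2/d$. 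This holds because $\rvx_t\sim\mathcal{N}(\vzero,\rmI/d)$ is independent of $\rvx_q$, and multiplying by an independent sign leaves a symmetric Gaussian unchanged. The distractor terms are independent across $t$, so their sum is Gaussian with variance $\frac{\|\rvx_q\|^2}{d}\sum_{j\neq\Delta}\alpha_j^2=\frac{\|\rvx_q\|^2}{d}(S_2-\alpha_\Delta^2)$. The noise is also independent of $\tilde{y}(\rvx_q)$.

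For optimality, I will argue via the likelihood ratio. The label is uniform on $\pm1$, independent of $\rvx_q$, so the Bayes classifier given $(g,\rvx_q)$ picks the label maximizing the Gaussian likelihood $\exp(-(g-\mu y)^2/2s^2)$. Comparing $y=+1$ with $y=-1$, the log-likelihood ratio is $2\mu g/s^2$. Since $\mu>0$ almost surely, its sign is $\mathrm{sign}(g)$. Equivalently, two equal-variance Gaussians centered at $\pm\mu$ are optimally separated by the midpoint threshold $0$. The threshold does not depend on $\rvx_q$, so $\mathrm{sign}(g)$ is optimal whether or not one conditions on $\rvx_q$. The event $g=0$ has probability zero and can be ignored.

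For the error, I condition on $\tilde{y}(\rvx_q)=+1$; the case $-1$ is identical by symmetry. An error occurs when $g<0$, which has probability $\Phi(-\mu/s)$. Simplifying gives
\[\frac{\mu}{s}=\frac{\alpha_\Delta\|\rvx_q\|^2}{\|\rvx_q\|\sqrt{(S_2-\alpha_\Delta^2)/d}}=\alpha_\Delta\|\rvx_q\|\sqrt{\frac{d}{S_2(\ell,\lambda)-\alpha_\Delta^2}},\]
which matches the claimed expression. Averaging over the uniform label leaves it unchanged.

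The only real subtlety is the independence of the distractor noise from the target label and its exact Gaussianity given $\rvx_q$. This is where the assumptions that the labels are i.i.d.\ and independent of the inputs, and that the query coincides with exactly one context token, are used. I will also note the degenerate case $\ell=1$, where $S_2=\alpha_\Delta^2$. There $s=0$, the error is $0$, and this is consistent with reading $\Phi(-\infty)=0$.
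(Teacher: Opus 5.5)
Your proposal is correct and takes essentially the same route as the paper. Like the paper, you obtain the conditional Gaussian law of $g$ given $\rvx_q$ and $\tilde{y}(\rvx_q)$ from independent, sign-symmetric distractor terms, place the Bayes threshold at $g=0$ using equal priors and equal-variance likelihoods centered at $\pm\alpha_\Delta\|\rvx_q\|^2$, and read off the Gaussian tail probability; your extra remarks (optimality of $\mathrm{sign}(g)$ even without conditioning on $\rvx_q$, and the degenerate case $\ell=1$) are correct refinements the paper does not spell out.
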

\begin{proof}
    Conditional on $\rvx_q$ and $\tilde{y}(\rvx_q)$, the signal term in $g$ is deterministic and equal to $\alpha_\Delta \|\rvx_q\|^2\tilde{y}(\rvx_q)$. For every distractor $t\neq t^*$, the inner product $\rvx_q^\intercal \rvx_t$ is Gaussian with mean zero and variance $\|\rvx_q\|^2/d$, and multiplication by the independent label $\tilde y(\rvx_t)$ leaves this Gaussian distribution unchanged. The distractor terms are independent, so their sum is Gaussian with variance $\|\rvx_q\|^2(S_2(\ell,\lambda)-\alpha_\Delta^2)/d$.

    The likelihood of $g$ under $\tilde{y}(\rvx_q)=1$ is a Gaussian with mean $\alpha_\Delta\|\rvx_q\|^2$, while the likelihood under $\tilde{y}(\rvx_q)=-1$ has the same variance and the opposite mean. With equal priors, the Bayes decision boundary for $g$ is therefore $g=0$, so the Bayes predictor based on this statistic is $\mathrm{sign}(g)$. The error probability is the probability that a Gaussian with mean $\alpha_\Delta\|\rvx_q\|^2$ and variance $\|\rvx_q\|^2(S_2(\ell,\lambda)-\alpha_\Delta^2)/d$ is negative, yielding the displayed expression.
\end{proof}

Lemma \ref{lem:bayes_err} formalizes our intuition from above, where larger $\alpha_\Delta$ (and therefore weaker decay) decreases the label-coordinate Bayes error. However, weaker decay also increases $S_2$, which in turn increases the error. Our arguments revolve around this key insight as we employ this lemma below.

\begin{lemma}[Effective context size] \label{lem:effective_length_scale}
Let $\tau = e^{-\lambda}$. If $\tau \geq 1$ and $\ell/\tau\to\infty$, then
\[S_2(\ell, \lambda) = \Theta(\tau) \,.\]
Moreover, $S_2(\ell,\lambda)-\alpha_\Delta^2=\Theta(\tau)$ uniformly over $\Delta\in\{1,\ldots,\ell\}$.
\end{lemma}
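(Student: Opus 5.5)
We plan to compute $S_2$ directly as a truncated geometric series. With $\alpha_j=\exp(-j e^{\lambda})=\exp(-j/\tau)$, we have $\alpha_j^2=r^j$ where $r=e^{-2/\tau}\in(0,1)$. Hence
\[
S_2(\ell,\lambda)=\sum_{j=1}^{\ell} r^j=\frac{r(1-r^{\ell})}{1-r}.
\]
We will bound the three factors separately. First, since $\ell/\tau\to\infty$, we have $r^\ell=e^{-2\ell/\tau}\to 0$, so $1-r^\ell\in[1/2,1]$ eventually. Second, since $\tau\ge 1$, we have $r\ge e^{-2}$, so $r\in[e^{-2},1)$ is bounded above and below by absolute constants. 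Third, we need $1/(1-r)=\Theta(\tau)$. For this we will use the elementary inequalities $x/(1+x)\le 1-e^{-x}\le x$ with $x=2/\tau\in(0,2]$. These give $1-r\in[\frac{2}{3\tau},\frac{2}{\tau}]$, hence $\tau/2\le 1/(1-r)\le 3\tau/2$. Multiplying the three bounds yields $S_2=\Theta(\tau)$ with explicit constants, for instance $\frac{e^{-2}}{4}\tau\le S_2\le\frac32\tau$ once $\ell/\tau$ is large.

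For the second claim, the upper bound is immediate: $S_2-\alpha_\Delta^2\le S_2=O(\tau)$. The lower bound needs some care, because subtracting $\alpha_\Delta^2\le r\le 1$ could matter when $\tau$ is of order one. We plan to avoid a case split by dropping the single term $j=\Delta$ and keeping every other term:
\[
S_2-\alpha_\Delta^2=\sum_{j\ne\Delta} r^j\ \ge\ \sum_{j=2}^{\ell} r^j - r^{\Delta}\mathbbm{1}[\Delta\ge 2]+ r\,\mathbbm{1}[\Delta\ge2].
\]
The cleaner way to say this is that the sum over $j\neq\Delta$ is at least the sum over $j\in\{2,\dots,\ell\}$, since removing the largest term $r^1$ is the worst case. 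That remaining sum equals $r\,(S_2 \text{ over } \ell-1 \text{ terms})$. Since $r\ge e^{-2}$ and $(\ell-1)/\tau\to\infty$, this is $\Theta(\tau)$ by the first part. The bound involves no dependence on $\Delta$, so it is uniform in $\Delta\in\{1,\ldots,\ell\}$.

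The main obstacle is keeping all constants independent of how $\tau$ relates to $\ell$ and $\Delta$. We handle this through the uniform bounds on $r$ and on $1-r$, which use only $\tau\ge1$, together with the fact that $r^{\ell}\to 0$. The case $\ell=1$ is excluded because $\ell/\tau\to\infty$ forces $\ell\ge 2$ eventually.
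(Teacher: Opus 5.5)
Your proof is correct and follows essentially the same route as the paper. Both use the closed-form truncated geometric series, elementary exponential inequalities in $x=2/\tau\in(0,2]$, the limit $1-e^{-2\ell/\tau}\to 1$, and, for the uniform claim, the fact that removing the largest term $j=1$ is the worst case, which leaves $r\sum_{j=1}^{\ell-1}r^j=\Theta(\tau)$. The only difference is cosmetic: the paper bounds the combined factor $r/(1-r)=1/(e^{2/\tau}-1)$ via $x\le e^x-1\le\frac{e^2-1}{2}x$, whereas you bound $r$ and $1/(1-r)$ separately. Your first displayed inequality for $\sum_{j\neq\Delta}r^j$ is actually an identity and can be dropped in favor of the one-line monotonicity argument that follows it.
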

\begin{proof}
    Since $\alpha_j=\exp(-j/\tau)$,
    \[
    S_2(\ell,\lambda)=\sum_{j=1}^{\ell}e^{-2j/\tau}
    =\frac{e^{-2/\tau}(1-e^{-2\ell/\tau})}{1-e^{-2/\tau}}.
    \]
    For $\tau\geq 1$, set $x=2/\tau\in(0,2]$. The inequalities $x\leq e^x-1\leq \frac{e^2-1}{2}x$ imply
    \[
    \frac{\tau}{e^2-1}\leq \frac{1}{e^{2/\tau}-1}\leq \frac{\tau}{2}.
    \]
    Since $\ell/\tau\to\infty$ implies $1-e^{-2\ell/\tau}\to 1$, this proves $S_2(\ell,\lambda)=\Theta(\tau)$. For the second statement, the smallest value of $S_2(\ell,\lambda)-\alpha_\Delta^2$ is obtained by removing the largest term, so it is
    \[
    \sum_{j=2}^{\ell}e^{-2j/\tau}
    =\frac{e^{-4/\tau}(1-e^{-2(\ell-1)/\tau})}{1-e^{-2/\tau}}
    =\Theta(\tau).
    \]
    The matching upper bound follows from $S_2(\ell,\lambda)=\Theta(\tau)$.
\end{proof}

Lemma \ref{lem:effective_length_scale} indicates that the size of $S_2(\ell, \lambda)$ grows similarly with $\tau$. Since $S_2(\ell, \lambda)$ counts the contribution to variance from each input token in the context (weighted by the decay factor), $\tau$ can be thought of intuitively as the effective size of the context induced by a choice of gating $\lambda$. Higher $\lambda$ implies more severe decay, which in turn reduces the size of $\tau$; this is consistent with the intuition that more severe decay causes the model to forget earlier tokens, reducing the effective size of the context. We make the relationship between $\tau$ and context explicit in Proposition \ref{prop:local} below.

Our next proposition makes the no-decay limit explicit.

\begin{proposition}[No-decay limit for long-context retrieval] \label{prop:decay_for_length_gen}
    Fix $\ell\geq 2$ and $\Delta\in\{1,\ldots,\ell\}$. Let $\tau=e^{-\lambda}$ and let $p_{\tau,\Delta}(\rvx_q)$ denote the conditional label-coordinate Bayes error at distance $\Delta$. Then, as $\tau\to\infty$,

    \[
        p_{\tau,\Delta}(\rvx_q)
        \rightarrow
        \Phi\left(-\|\rvx_q\|\sqrt{\frac{d}{\ell-1}}\right).
    \]
\end{proposition}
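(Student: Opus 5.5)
The plan is to start from Lemma \ref{lem:bayes_err}. It gives the conditional label-coordinate Bayes error at distance $\Delta$ in closed form:
\[
p_{\tau,\Delta}(\rvx_q)=\Phi\left(-\alpha_\Delta\|\rvx_q\|\sqrt{\frac{d}{S_2(\ell,\lambda)-\alpha_\Delta^2}}\right).
\]
With $\ell$, $\Delta$, $d$ and $\rvx_q$ held fixed, the only quantities that depend on $\tau$ are the decay weights $\alpha_j=\exp(-j e^{\lambda})=\exp(-j/\tau)$ for $j=1,\ldots,\ell$. So the proof reduces to computing the limit of the argument of $\Phi$ and then using continuity of $\Phi$.

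The steps are as follows.

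First, as $\tau\to\infty$, each $\alpha_j=e^{-j/\tau}\to 1$ for every fixed $j\in\{1,\ldots,\ell\}$. In particular $\alpha_\Delta\to 1$.

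Second, because $\ell$ is fixed, $S_2(\ell,\lambda)=\sum_{j=1}^{\ell}e^{-2j/\tau}$ is a finite sum. Hence $S_2(\ell,\lambda)\to \ell$ and $S_2(\ell,\lambda)-\alpha_\Delta^2\to \ell-1$. Since $\ell\ge 2$, this limit is strictly positive. (Note that we do \emph{not} use Lemma \ref{lem:effective_length_scale} here: its regime $\ell/\tau\to\infty$ is the opposite of this one.)

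Third, the map $(a,s)\mapsto a\sqrt{d/s}$ is continuous at $(1,\ell-1)$ because $s>0$ there. The argument of $\Phi$ therefore converges to $-\|\rvx_q\|\sqrt{d/(\ell-1)}$. Continuity of $\Phi$ then gives the stated limit pointwise in $\rvx_q$.

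Finally, to connect with the informal statement in Section \ref{sec:long_retrieval}, I would add two remarks.

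The first remark concerns the unconditional limit. The unconditional error $\mathrm{Err}(\tau)$ is the expectation of $p_{\tau,\Delta}(\rvx_q)$. Since $0\le p\le 1$, dominated convergence lets us pass the limit inside the expectation.

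The second remark concerns the approximation. For large $d$, $\|\rvx_q\|^2$ concentrates at $1$, so $\|\rvx_q\|\approx 1$. This gives $\Phi(-\sqrt{d/(\ell-1)})$, which is the formal content behind the displayed approximation (the informal version's $\sqrt{d/\ell-1}$ should be read as $\sqrt{d/(\ell-1)}$).

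There is no real obstacle. The only point needing care is that the denominator stays bounded away from zero, and this is exactly where the hypothesis $\ell\ge 2$ enters. With $\ell=1$ there are no distractors, the variance vanishes, and the error tends to $0$ instead.
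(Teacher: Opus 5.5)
Your proposal is correct and matches the paper's proof: apply Lemma \ref{lem:bayes_err}, note that $\alpha_j=e^{-j/\tau}\to 1$ for each of the finitely many $j\le\ell$ (so $\alpha_\Delta\to 1$ and $S_2(\ell,\lambda)-\alpha_\Delta^2\to\ell-1$), and conclude by continuity of $\Phi$. Your extra remarks are sound additions that the paper leaves implicit: $\ell-1>0$ keeps the map continuous at the limit, dominated convergence handles the unconditional error, and $\sqrt{d/\ell-1}$ should be read as $\sqrt{d/(\ell-1)}$.
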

\begin{proof}
    By Lemma \ref{lem:bayes_err},
    \[
        p_{\tau,\Delta}(\rvx_q)
        =
        \Phi\left(
            -\alpha_\Delta \|\rvx_q\|
            \sqrt{\frac{d}{S_2(\ell,\lambda)-\alpha_\Delta^2}}
        \right).
    \]
    For fixed $\ell$, $\alpha_j=e^{-j/\tau}\to 1$ for every $j\in\{1,\ldots,\ell\}$. Therefore $\alpha_\Delta\to 1$ and
    \[
        S_2(\ell,\lambda)-\alpha_\Delta^2
        =
        \sum_{j=1}^{\ell}\alpha_j^2-\alpha_\Delta^2
        \rightarrow
        \ell-1.
    \]
    The claimed limit follows by continuity of $\Phi$.
\end{proof}

Proposition \ref{prop:decay_for_length_gen} shows that without decay, the relevant signal competes against all $L=\ell-1$ distractors. If $L/d\to\infty$ (in particular, if $d$ is fixed and $L\to\infty$), then the no-decay signal-to-noise ratio converges to zero and the Bayes error approaches $\Phi(0)=1/2$. This is the sense in which some decay is necessary for long-context retrieval: the model must reduce the effective number of retained distractors. Lemma \ref{lem:effective_length_scale} says that in the regime $\ell/\tau\to\infty$, this effective number scales as $\Theta(\tau)$, so nontrivial label-coordinate retrieval requires $\tau$ not to be much larger than $d$. The next proposition gives more information about \textit{which} positions specifically we have a chance to learn as context length increases and gating strengthens.

\begin{proposition}[Decay induces locality] \label{prop:local}
    Suppose $\ell/\tau\to\infty$, $1\leq \tau=o(d)$, and $\|\rvx_q\|=\Theta(1)$. For any fixed $\delta \in (0, \frac{1}{2})$, let $\Delta_{max}$ be the maximum distance from the query such that the label-coordinate Bayes error for a token at $\Delta_{max}$ is at most $\delta$. Then
    \[\Delta_{max} = \Theta\left( \tau \log \frac{d}{\tau} \right) \,.\]
\end{proposition}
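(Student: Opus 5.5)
We derive the result directly from Lemma \ref{lem:bayes_err} combined with Lemma \ref{lem:effective_length_scale}. By Lemma \ref{lem:bayes_err}, the conditional error at distance $\Delta$ is
\[
p_{\tau,\Delta}(\rvx_q)=\Phi\left(-\alpha_\Delta\|\rvx_q\|\sqrt{\frac{d}{S_2(\ell,\lambda)-\alpha_\Delta^2}}\right), \qquad \alpha_\Delta=e^{-\Delta/\tau}.
\]
Because $\Phi$ is strictly increasing, the condition $p_{\tau,\Delta}\le\delta$ is equivalent to the signal-to-noise condition
\[
\alpha_\Delta\|\rvx_q\|\sqrt{\frac{d}{S_2-\alpha_\Delta^2}}\ \ge\ z_\delta:=-\Phi^{-1}(\delta)>0,
\]
where $z_\delta$ is a fixed positive constant since $\delta\in(0,1/2)$. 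Lemma \ref{lem:effective_length_scale} gives $S_2-\alpha_\Delta^2=\Theta(\tau)$ uniformly in $\Delta$ whenever $\tau\ge1$ and $\ell/\tau\to\infty$. With $\|\rvx_q\|=\Theta(1)$, the left-hand side is therefore sandwiched as $c_1 e^{-\Delta/\tau}\sqrt{d/\tau}\le\text{LHS}\le c_2 e^{-\Delta/\tau}\sqrt{d/\tau}$ for constants $c_1,c_2>0$.

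From these two inequalities we get a pair of thresholds. The lower sandwich bound shows that every $\Delta$ with
\[
\Delta\le \tau\log\!\Big(\tfrac{c_1}{z_\delta}\sqrt{d/\tau}\Big)=\tfrac{\tau}{2}\log\tfrac{d}{\tau}+\tau\log\tfrac{c_1}{z_\delta}
\]
meets the criterion. The upper sandwich bound shows that no $\Delta$ exceeding $\tfrac{\tau}{2}\log\tfrac{d}{\tau}+\tau\log\tfrac{c_2}{z_\delta}$ can meet it. The error is also monotone in $\Delta$: $\alpha_\Delta$ decreases, and although the noise term $S_2-\alpha_\Delta^2$ rises as $\Delta$ grows, it stays $\Theta(\tau)$. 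Monotonicity is not actually required, because $\Delta_{\max}$ is defined as the largest admissible $\Delta$ and the two bounds pin it down regardless. This yields
\[
\Delta_{\max}=\tfrac{\tau}{2}\log\tfrac{d}{\tau}+O(\tau).
\]

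The only remaining step is to check that the additive $O(\tau)$ term is dominated by $\tau\log(d/\tau)$. This is exactly where the hypothesis $\tau=o(d)$ enters: it forces $\log(d/\tau)\to\infty$, so the constant-order correction $\tau\log(c_i/z_\delta)$ is lower order and both bounds are $\Theta(\tau\log(d/\tau))$. The same hypothesis guarantees that the lower threshold is positive, so $\Delta_{\max}\ge1$ and the set of admissible $\Delta$ is nonempty for large $d/\tau$.

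We also need the lower threshold to fall inside the valid range $\Delta\le\ell$. This follows from $\ell/\tau\to\infty$ once $\tau\log(d/\tau)\ll\ell$; if that fails, we read the statement as holding in the regime where $\ell$ grows faster, and note this interpretation. Integer rounding of $\Delta$ changes the result by at most $1\le\tau$, so it is absorbed into the $\Theta$.
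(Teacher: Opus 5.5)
Your proposal is correct and follows the paper's proof essentially step for step. Like the paper, you invert $\Phi$ to obtain the signal-to-noise threshold $-\Phi^{-1}(\delta)$ and use Lemma~\ref{lem:effective_length_scale} to sandwich $S_2-\alpha_\Delta^2$ uniformly by $\Theta(\tau)$. You then take logarithms to get $\Delta_{\max}=\tfrac{\tau}{2}\log\tfrac{d}{\tau}+O(\tau)$ and invoke $\tau=o(d)$ to absorb the additive term. Your remarks on integer rounding and on needing the lower threshold to lie below $\ell$ are points the paper's proof leaves implicit. The second one is a real unstated growth requirement on $\ell$, since otherwise $\Delta_{\max}$ is capped at $\ell$, but neither remark changes the argument.
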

\begin{proof}
    Let $c_\delta=-\Phi^{-1}(\delta)>0$. By Lemma \ref{lem:bayes_err}, the label-coordinate Bayes error at distance $\Delta$ is at most $\delta$ if and only if
    \[
    \alpha_\Delta \|\rvx_q\|\sqrt{\frac{d}{S_2(\ell,\lambda)-\alpha_\Delta^2}}
    \geq c_\delta.
    \]
    Lemma \ref{lem:effective_length_scale} gives $S_2(\ell,\lambda)-\alpha_\Delta^2=\Theta(\tau)$ uniformly over $\Delta$. Since $\|\rvx_q\|=\Theta(1)$ and $\alpha_\Delta=e^{-\Delta/\tau}$, there are constants $c_1,c_2>0$ depending only on $\delta$ and the implicit constants in $\|\rvx_q\|=\Theta(1)$ such that every retrievable distance satisfies
    \[
    e^{-\Delta/\tau}\geq c_1\sqrt{\frac{\tau}{d}},
    \]
    and every distance satisfying $e^{-\Delta/\tau}\geq c_2\sqrt{\tau/d}$ is retrievable. Taking logarithms gives matching upper and lower bounds of the form
    \[
    \tau\left(\frac{1}{2}\log\frac{d}{\tau}-\log c_2\right)
    \leq \Delta_{max}
    \leq
    \tau\left(\frac{1}{2}\log\frac{d}{\tau}-\log c_1\right).
    \]
    Because $\tau=o(d)$, the logarithm diverges, so the additive constants are lower order. Therefore $\Delta_{max}=\Theta(\tau\log(d/\tau))$.
\end{proof}

Provided that $\tau \ll d$, Proposition \ref{prop:local} suggests that the distance of the furthest token away from which we can retrieve scales like $\tau\log(d/\tau)$. Thus $\tau$ controls the effective context size up to a logarithmic factor. Hence, generalization is only possible when we enable decay, but decaying induces a locality on our retrieval. Random retrieval is impossible as context length increases, but provided the relevant tokens remain close to the query, we may generalize to arbitrarily long contexts.

\subsection{How representative is a static gate?}
\label{app:gate_drift}

Our long-context analysis conditions on a fixed $\lambda$, whereas the gates in our full Mamba experiments remain trainable. The short-context dynamics give one reason initialization may nevertheless remain informative: the update to the gating factor contains $(\alpha\log\alpha)^2$ (see \eqref{eq:a_reduced_dyn}), which becomes small near both weak gating ($\alpha\approx 1$) and strong gating ($\alpha\approx 0$).

We also directly measure gate drift in the logical-rules length-generalization experiment of Figure \ref{fig:horn}d. For each initialization, Table \ref{tab:gate_drift} reports the median effective length $\tau=-1/\log\alpha$ across heads before and after 200k training steps, together with the corresponding absolute drift in $\alpha$. The effective lengths span five orders of magnitude at initialization. Their ordering is preserved after training, and the changes are modest relative to this sweep, although intermediate gates do move somewhat. Thus, initialization remains a useful proxy for the post-training gating regime.

\begin{table}[h]
    \caption{\textbf{Gate drift during logical-rules length-generalization training.} Medians are taken across Mamba heads in the experiment from Figure \ref{fig:horn}d.}
    \label{tab:gate_drift}
    \centering
    \small
    \begin{tabular}{lcc}
        \toprule
        Initial median $\tau$ (tokens) & Median $\tau$ after 200k steps & $\lvert\Delta\alpha\rvert$ \\
        \midrule
        $2.1\times 10^{4}$ (weakest) & $1.2\times 10^{4}$ & $4\times 10^{-5}$ \\
        $297$ & $315$ & $1\times 10^{-3}$ \\
        $17$ & $14$ & $0.024$ \\
        $2.8$ & $3.0$ & $0.083$ \\
        $1.4$ & $1.4$ & $0.067$ \\
        $0.8$ & $0.7$ & $0.053$ \\
        $0.2$ (strongest) & $0.2$ & $7\times 10^{-3}$ \\
        \bottomrule
    \end{tabular}
\end{table}

\section{Weak Tool Retrieval}
In our logical-rules retrieval (Section \ref{sec:logic_rules_retrieval}) and tool-calling (Section \ref{sec:tool_calling}) tasks, we assume that the relevant Horn clause or tool definition always appears in the context. In practice, agentic systems use cheap, often noisy tool retrievers to populate their context with relevant tools \citep{robertson2009bm25,patil2024gorilla,qin2023toolllm}. As a result, the correct tool may not always be present. 

In this appendix, we briefly highlight a setting in which we have a tool retriever whose fidelity is parameterized. A strong retriever surfaces the correct tool with high probability. A weak retriever surfaces the correct tool with low probability, but its weakness may be compensated by increasing the number of retrieved tools. In this way, we highlight another dimension in which balancing retrieval with length generalization is important: a model that generalizes to longer context lengths may accept a larger tool pool, compensating for a weaker retriever. Hence, an SSM with perfect retrieval but poor length generalization may not perform as well as an SSM that sacrifices some retrieval in favor of better length generalization.

\subsection{Setup}
To parameterize a retriever with controllable strength, we use the ranking procedure described in Section \ref{app:pl_ranking_model}, which uses a Plackett-Luce (PL) noise model with a temperature parameter $\beta$. When $\beta \rightarrow \infty$, the ranking is deterministic, such that the most relevant tool occurs first. When $\beta = 0$, ranking is purely random. Intermediate $\beta$ interpolates between these two regimes, such that items with closer ranks experience a progressively greater probability of swapping as $\beta$ decreases. We then take the top $k$ items by rank to form our input context. Hence, higher $\beta$ corresponds to stronger retrievers. We explore the performance of our models for varying $\beta$ on both the logical rules retrieval and tool calling tasks below. 

\subsection{Results}
See Figure \ref{fig:weak_retriever} for results in our weak retrieval setting. In general, we find that a Mamba model with appropriately balanced gating strength generalizes well to larger pool sizes: low gating prevents the model from generalizing effectively to large pool sizes (and thus misses the greater recall afforded at large pool sizes), while strong gating prevents the model from learning retrieval as effectively. Moderate gating allows the model to continue generalizing effectively while also learning strong retrieval, benefiting most in this setting when the tool retriever is imperfect.

\begin{figure}
    \centering
    \includegraphics[width=0.9\linewidth]{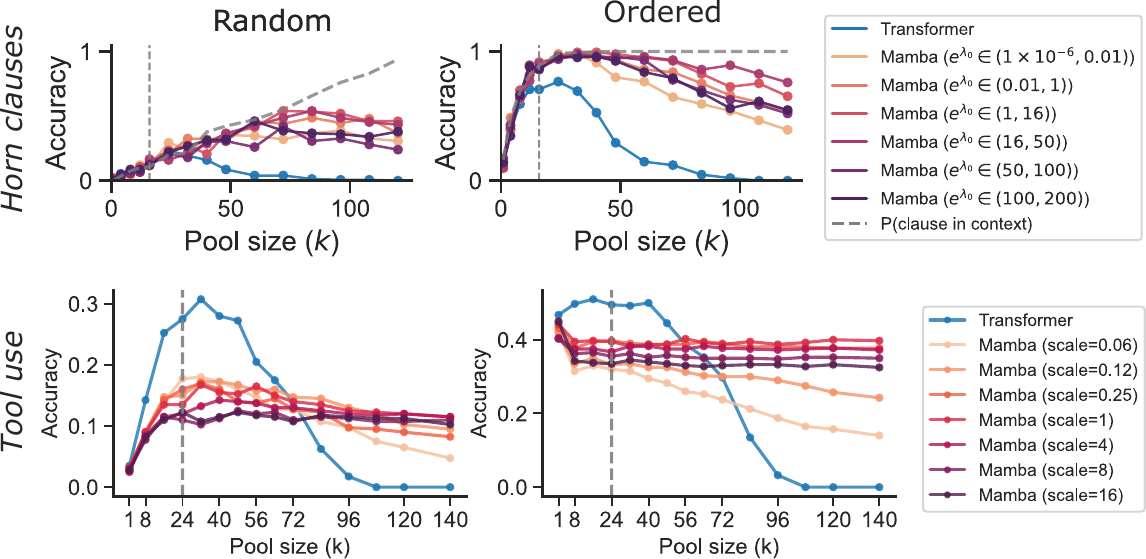}
    \caption{\textbf{Accuracy with a weak tool retriever.} We plot the accuracy of Mamba with a variety of gating strength together with a Transformer model on the logical rules retrieval task (\textit{top}) and BFCL tool use benchmark (\textit{bottom}), for both a virtually random tool retriever ($\beta = 0$ for Horn clauses, $\beta=0.05$ for tool use) and noisily ordered tool retriever ($\beta = 1$ for Horn clauses, $\beta = 10$) for tool use). The scale of $\beta$ differs between the two settings since their rankings have different scales. We see that Mamba models can generally improve or remain constant in performance with larger pool sizes, benefiting from greater recall that compensates for a weaker retriever. However, Transformers and Mamba models with low gating generalize less well, with performance falling as pool size increases.}
    \label{fig:weak_retriever}
\end{figure}

\section{Additional Experimental Details}
\label{app:experiments_details}

Below, we record the particular model and task configurations in each experiment.

\subsection{Theoretical validation experiments (Section \ref{sec:theory})} \label{app:theory_exp_details}

We record additional details for the experiments used to validate our theoretical predictions, whose results are plotted in Figure \ref{fig:theory}. We train the simplified SSM model described in Section \ref{sec:simple_ssm_model} using stochastic gradient descent with learning rate 0.1 and batch size 128; we use SGD here because it most directly matches our gradient-flow analysis. Additional panel-specific configurations are

\begin{itemize}
    \item Figure \ref{fig:theory}a (initialization bias): We configured the task with $d = 512$, $P=32$, $\ell=8$, $\Delta=4$, and $\epsilon=0.1$. The margins were computed from the model parameters after 25 gradient steps.
    \item Figure \ref{fig:theory}b (escape time): We configured the task with $d = 512$, $P=32$, $\ell=4$, $\Delta=1$, $\epsilon=0.1$, and increased the learning rate to $0.3$ to ensure escape time $T$ fell within a tractable range. 
    \item Figure \ref{fig:theory}c (Accuracy with increasing $\tau$): We configured the task with $d = 128$, $\ell=1024$, and trained for 10 thousand gradient descent steps before measuring accuracy.
    \item Figure \ref{fig:theory}d (Heatmap for varying $\tau$ and $\Delta$). We configured the task with $d=128$, $\ell=512$, and trained for 10 thousand gradient descent steps before measuring accuracy.
\end{itemize}

\subsection{Multi-token retrieval task (Section \ref{sec:kv_retrieval})}

\label{app:experimental_details_kv_retrieval}

We give more details on the perturbed key-value experiments here. In all results, both keys and values are of length $m=8$ tokens, drawn from a vocabulary of $64$ tokens. We first generate a static dictionary $D$ of key-value pairs, of size $\abs{D} = 10000$. During training, we fix a maximal number of key-value pairs $N$, and for each example we sample $n \sim \mathrm{Unif}(\{1,\dots, N\})$, sample $(k_1, v^\star_1), \dots, (k_n,v^\star_n) \sim D$ and then perturb each token in each static value $v^\star_i$ with probability $\epsilon$, to generate an example with $n$ in-context demonstrations $(k_1, v_1), \dots, (k_n,v_n)$. During evaluation, we only evaluate on the maximal choice of $N$ for our in-distribution experiments. We use $\epsilon = 0.1$ and $q = 0.1$.

\subsubsection{Model Architecture and Initialization}
\paragraph{Mamba-2.}
We use Mamba-2~\citep{dao2024transformers} with 24 layers, $d_\text{model} = 768$, expansion factor 2, 24 heads (head dimension 64), and SSM state dimension $d_\text{state} = 128$. Input and output embeddings are tied.

In Mamba-2's default initialization, each head independently draws $\lambda \sim \log(\mathrm{Uniform}(1, 16))$, giving one scalar per head with $\lambda \in [0, \log 16] \approx [0, 2.77]$. In the codebase, this parameter is referred to as $A_\text{log}$.
In the \textbf{offset} configuration, we shift all $\lambda$ values by a constant $\delta$ after this default initialization:
\[
    \lambda \leftarrow \lambda + \delta.
\]
Setting $\delta = 0$ recovers the standard initialization. Negative offsets uniformly reduce the decay, e.g.,\ $\delta = -10$ shifts the range to approximately $[-10, -7.2]$, and effectively recovers the no-decay regime. This parameterization allows us to continuously vary the gating strength while retaining the per-head variation from the default initialization.

\paragraph{Transformer.}
We use GPT-NeoX~\citep{black2022gpt} with 12 layers, $d_\text{model} = 768$, 12 attention heads, intermediate size 3072, full RoPE \citep{su2024roformer}, Flash Attention 2, no dropout, and tied embeddings.

\subsubsection{Optimization}

All models are trained with AdamW \citep{loshchilov2017decoupled} ($\beta = (0.9, 0.95)$, weight decay $0.1$, peak lr $3 \times 10^{-4}$) using linear warmup (5\% of steps) followed by cosine decay to 10\% of peak.
Gradients are clipped to norm 1.0.
Training uses bfloat16 mixed precision on 8 H100 GPUs with a global batch size of 128.
We train for up to 100k steps with early stopping at 99\% sequence accuracy.

\subsubsection{Plot descriptions for Figure~\ref{fig:perturbed_kv}}   
  All experiments in Figure~\ref{fig:perturbed_kv} use five random seeds.
  \begin{itemize}
      \item \textbf{(a) In-distribution training curves.} For each offset value, we compute the \emph{escape time} of each seed, defined as   
  the first                                                                           
  training step at which evaluation sequence accuracy exceeds 0.95. The plotted curve
  corresponds                                                                         
  to the run with the median escape time among the five seeds. The                         
  default Mamba initialization ($e^{\mathrm{offset}} = 1$) did not escape the memorization solution during our training ($100k$ steps). 
  \item  \textbf{(b) Escape time vs.\ initialization.}                                       
  For each offset, escape times of all five seeds are shown as individual points.
  The linear fit is shown                                                                   
  with $R^2$ reported in the legend.
  \item   \textbf{(c) Length generalization.}                                                 
  All models are trained on $N{=}512$ KV pairs and evaluated on                       
  $n \in \{512, 768, 1024, 1280, 1536, 1792, 2048\}$.                                 
  For each offset and evaluation length, we report the mean final sequence accuracy   
  across five seeds, with shaded bands indicating 95\% confidence intervals. 
  \end{itemize}
  
\subsection{Logic rules retrieval task (Section \ref{sec:logic_rules_retrieval})} \label{app:logic_rules_retrieval}
We provide additional details on the ranking procedure used to order clauses in context, and specific model/task configurations.

\subsubsection{Ranking clauses} \label{app:pl_ranking_model}
During a tool use interaction, agentic systems rely on tool retriever algorithms to first collect a pool of tools relevant to the user query. This procedure is often done by using an efficient ranking algorithm like BM25 to order the set of all available tools, then select the top $k$ to present to the model \citep{robertson2009bm25,patil2024gorilla,qin2023toolllm}. Tool retrievers are often designed with efficiency in mind, with a tradeoff between speed and quality. Hence, the ranking procedure may involve a degree of noise.

To model this procedure in the context of our synthetic logical rules retrieval task, we consider the following process. For a transition from $S_i$ to $S_{i+1}$, we sort the Horn clauses that implement this transition into four classes.

\begin{itemize}
    \item \textbf{Class 1}. The clause is applicable given the current set of facts \textbf{and} the goal is reachable from the resulting state
    \item \textbf{Class 2}. The clause is applicable given the current set of facts but the goal is \textbf{not} reachable from the resulting state
    \item \textbf{Class 3}. The clause is \textbf{not} applicable given the current set of facts but the goal is reachable from the resulting state
    \item \textbf{Class 4}. The clause is \textbf{not} applicable given the current set of facts and the goal is \textbf{not} reachable from the resulting state
\end{itemize}

Given these classes, we order clauses probabilistically by class using a Plackett-Luce (PL) ranking model with a temperature parameter $\beta$. When $\beta = 0$, the order is random, and when $\beta \rightarrow \infty$, the order is deterministic in class, with class 1 clauses appearing first, followed by lower classes\footnote{The position of clauses \textit{within} a class remains random}. We then take the top $k$ clauses in the order, and these become the pool of clauses presented in-context to the model. Hence, higher $\beta$ corresponds to a stronger tool retriever that more accurately orders clauses and ensures a relevant clause is likely to appear in the context.

\subsubsection{Model details}

Across all experiments, we use a Mamba-2 architecture with 4 layers, 16 heads, and the same head and state dimension of 256. We use a Transformer with a Llama-like architecture featuring 4 layers and 8 heads, each with dimension 128. All models are trained using the Muon optimizer \citep{jordan2024muon} with peak learning rate $1 \times 10^{-3}$ attained after a linear warm-up for 0.1 proportion of the total training time, followed by cosine decay. We use a batch size 128.

We use Muon because it trained both architectures substantially more reliably on this task; despite hyperparameter sweeps, AdamW did not train them consistently. Optimizer choice may affect the quantitative timing of the memorization-to-retrieval transition, although the qualitative dependence on gating is consistent across the SGD, AdamW, and Muon experiments in this paper.

\subsubsection{Task details} \label{app:horn_task_details}
The prompt has the following format:
\begin{verbatim}
    name:clause|name:clause| . . . |start_prop,goal_prop<START>
\end{verbatim}
\texttt{name} is a four character sequence of numerals that are randomly sampled for each clause. \texttt{clause} takes the form \texttt{r\_abcd->r\_efgh} where the letters correspond to alphanumeric symbols. In this way, we represent propositions as the character sequence \texttt{r\_} followed by four alphanumeric symbols. \texttt{start\_prop} is the starting/current proposition and \texttt{goal\_prop} is the goal proposition. \texttt{<START>} is a special start token appended to the end of every prompt. \texttt{|} and \texttt{,} are special separator tokens. The model outputs a sequence of four numerals, corresponding to the name of the chosen clause.

Across all tasks, we restrict Horn clauses to have a single input proposition and a single output proposition. The task consists of $M=3$ sets of propositions, necessitating two rounds of interaction. There are $128$ propositions in each set, and $K=128$ clauses transitioning between sets. 

During training, the pool size is sampled uniformly at random between 1 and some maximum $k$ (inclusive) per example. For the in-context retrieval experiments, the maximum pool size is fixed to be $k = 32$ during training. For the length generalization experiments, the maximum pool size is fixed to be $k = 8$ during training.

\subsection{Tool-calling task (Section \ref{sec:tool_calling})}
\label{app:bfcl_experimental_details}

\paragraph{Training and Optimization}
Both pretraining and SFT use AdamW. For pretraining, we train on 300B tokens, with sequence length 4K and global batch size of 256. We use cosine decay, peak learning rate of $0.0003$ and weight decay of $0.1$. The Mamba and Transformer models use the same data and training pipeline.

\paragraph{Gate scaling}
Before SFT, we multiply Mamba's decay rates $e^\lambda$ by the scale $s$ shown in Figure \ref{fig:bfcl}. In the log-rate parameterization used in Section \ref{sec:simple_ssm_model}, this intervention is exactly $\lambda \leftarrow \lambda + \log s$; it is therefore analogous to the additive offset used in the multi-token retrieval experiments.

\paragraph{Architecture}
For the SSM, we use a standard Mamba-2 architecture, with 32 layers and dimension $d = 2048$, totaling 1.5B parameters (including the embedding layer).
For the Transformer, we use a Llama-based architecture, with $28$ layers and dimension $d = 2048$, totaling 2B parameters (including the embedding layer).

\subsection{Compute Resources}
\label{app:compute}
For all experiments in the paper, we run each individual training run on one node with 8xH100 Nvidia GPUs, each experiment takes below 10 hours, with a total of around 200 experiments. For pretraining the models, we train each model (Transformer and Mamba) on 128xH100, training for less than two days. We estimate that we used under 28,288 H100 GPU hours in total for this paper.

\applefootnote{ \textcolor{textgray}{\sffamily Apple and the Apple logo are trademarks of Apple Inc., registered in the U.S. and other countries and regions.}}

\end{document}

%% file: apple_preamble.tex
\usepackage{amsmath}
\usepackage{enumerate}
\usepackage{algorithm}
\usepackage{algpseudocode}
\usepackage{amsfonts}
\usepackage{amsthm}
\usepackage{cleveref}
\usepackage{diagbox}
\usepackage{colortbl}
\usepackage{amssymb}
\usepackage{xspace}
\usepackage{wrapfig}
\usepackage{adjustbox}
\usepackage{tabularx}
\usepackage{booktabs}
\usepackage{mathtools}
\usepackage{tikz}
\usepackage{enumitem}
\usepackage{silence}
\usepackage{dsfont}
\usepackage{xcolor}
\usepackage{multirow}
\usepackage{makecell}
\usepackage{xfakebold}
\input{math_commands}

\definecolor{textgray}{HTML}{6E6E73}
\usetikzlibrary{positioning, calc}
\usetikzlibrary{decorations.pathmorphing}

\makeatletter
\patchcmd{\wrong@fontshape}{\@gobbletwo}{}{}{}
\makeatother
\numberwithin{equation}{section}
\makeatletter
\AtBeginDocument{
  \urlstyle{sf}
  
}
\makeatother

\definecolor{light}{RGB}{125, 125, 125}
\crefname{tcb@cnt@pbox}{code}{code}
\Crefname{tcb@cnt@pbox}{Code}{Code}
\crefname{assumption}{assumption}{assumption}
\Crefname{assumption}{Assumption}{Assumptions}

\newtcolorbox[auto counter]{pbox}[2][]{
  colback=white,
  title=Code~\thetcbcounter: #2,
  #1,fonttitle=\sffamily,
  fontupper=\sffamily,
  arc=2pt,
  colframe=bgcolor,
  coltitle=fgcolor,
  colbacktitle=bgcolor,
  toptitle=0.25cm,
  bottomtitle=0.125cm
}

\makeatletter
\newcommand\applefootnote[1]{%
  \begingroup
  \renewcommand\thefootnote{}%
  \renewcommand\@makefntext[1]{\noindent##1}%
  \footnote{#1}%
  \addtocounter{footnote}{-1}%
  \endgroup
}
\makeatother

\definecolor{cverbbg}{gray}{0.90}

%% file: math_commands.tex
\usepackage{amsmath,amsfonts,bm}

\def\eqref#1{equation~\ref{#1}}

\def\1{\bm{1}}

\def\rva{{\mathbf{a}}}

\def\rvh{{\mathbf{h}}}

\def\rvw{{\mathbf{w}}}
\def\rvx{{\mathbf{x}}}

\def\rmI{{\mathbf{I}}}

\def\rmU{{\mathbf{U}}}

\def\vzero{{\bm{0}}}

\def\vx{{\bm{x}}}

\DeclareMathAlphabet{\mathsfit}{\encodingdefault}{\sfdefault}{m}{sl}
\SetMathAlphabet{\mathsfit}{bold}{\encodingdefault}{\sfdefault}{bx}{n}

\def\gL{{\mathcal{L}}}

\def\gS{{\mathcal{S}}}

\newcommand{\E}{\mathbb{E}}
\newcommand{\Ls}{\mathcal{L}}
\newcommand{\R}{\mathbb{R}}

\newcommand{\reals}{\R}

\newcommand{\abs}[1]{\left \lvert #1 \right \rvert}

%% file: main.bbl
\begin{thebibliography}{45}
\providecommand{\natexlab}[1]{#1}
\providecommand{\url}[1]{\texttt{#1}}
\expandafter\ifx\csname urlstyle\endcsname\relax
  \providecommand{\doi}[1]{doi: #1}\else
  \providecommand{\doi}{doi: \begingroup \urlstyle{rm}\Url}\fi

\bibitem[Arora et~al.(2023)Arora, Eyuboglu, Timalsina, Johnson, Poli, Zou,
  Rudra, and R{\'e}]{arora2023zoology}
Simran Arora, Sabri Eyuboglu, Aman Timalsina, Isys Johnson, Michael Poli, James
  Zou, Atri Rudra, and Christopher R{\'e}.
\newblock Zoology: Measuring and improving recall in efficient language models.
\newblock \emph{arXiv preprint arXiv:2312.04927}, 2023.

\bibitem[Basant et~al.(2025)Basant, Khairnar, Paithankar, Khattar,
  Renduchintala, Malte, Bercovich, Hazare, Rico, Ficek,
  et~al.]{basant2025nvidia}
Aarti Basant, Abhijit Khairnar, Abhijit Paithankar, Abhinav Khattar, Adithya
  Renduchintala, Aditya Malte, Akhiad Bercovich, Akshay Hazare, Alejandra Rico,
  Aleksander Ficek, et~al.
\newblock Nvidia nemotron nano 2: An accurate and efficient hybrid
  mamba-transformer reasoning model.
\newblock \emph{arXiv preprint arXiv:2508.14444}, 2025.

\bibitem[Ben-Kish et~al.(2024)Ben-Kish, Zimerman, Abu-Hussein, Cohen,
  Globerson, Wolf, and Giryes]{ben2024decimamba}
Assaf Ben-Kish, Itamar Zimerman, Shady Abu-Hussein, Nadav Cohen, Amir
  Globerson, Lior Wolf, and Raja Giryes.
\newblock Decimamba: Exploring the length extrapolation potential of mamba.
\newblock \emph{arXiv preprint arXiv:2406.14528}, 2024.

\bibitem[Black et~al.(2022)Black, Biderman, Hallahan, Anthony, Gao, Golding,
  He, Leahy, McDonell, Phang, et~al.]{black2022gpt}
Sidney Black, Stella Biderman, Eric Hallahan, Quentin Anthony, Leo Gao,
  Laurence Golding, Horace He, Connor Leahy, Kyle McDonell, Jason Phang, et~al.
\newblock Gpt-neox-20b: An open-source autoregressive language model.
\newblock In \emph{Proceedings of BigScience Episode\# 5--Workshop on
  Challenges \& Perspectives in Creating Large Language Models}, pages 95--136,
  2022.

\bibitem[Blakeman et~al.(2025)Blakeman, Grattafiori, Basant, Gupta, Khattar,
  Renduchintala, Vavre, Shukla, Bercovich, Ficek, et~al.]{blakeman2025nvidia}
Aaron Blakeman, Aaron Grattafiori, Aarti Basant, Abhibha Gupta, Abhinav
  Khattar, Adi Renduchintala, Aditya Vavre, Akanksha Shukla, Akhiad Bercovich,
  Aleksander Ficek, et~al.
\newblock Nvidia nemotron 3: Efficient and open intelligence.
\newblock \emph{arXiv preprint arXiv:2512.20856}, 2025.

\bibitem[Chan et~al.(2022)Chan, Santoro, Lampinen, Wang, Singh, Richemond,
  McClelland, and Hill]{chan2022data}
Stephanie Chan, Adam Santoro, Andrew Lampinen, Jane Wang, Aaditya Singh, Pierre
  Richemond, James McClelland, and Felix Hill.
\newblock Data distributional properties drive emergent in-context learning in
  transformers.
\newblock \emph{Advances in neural information processing systems},
  35:\penalty0 18878--18891, 2022.

\bibitem[Dao and Gu(2024)]{dao2024transformers}
Tri Dao and Albert Gu.
\newblock Transformers are ssms: Generalized models and efficient algorithms
  through structured state space duality.
\newblock \emph{arXiv preprint arXiv:2405.21060}, 2024.

\bibitem[Dao et~al.(2022)Dao, Fu, Ermon, Rudra, and
  R{\'e}]{dao2022flashattention}
Tri Dao, Dan Fu, Stefano Ermon, Atri Rudra, and Christopher R{\'e}.
\newblock Flashattention: Fast and memory-efficient exact attention with
  io-awareness.
\newblock \emph{Advances in neural information processing systems},
  35:\penalty0 16344--16359, 2022.

\bibitem[Dong et~al.(2024)Dong, Li, Dai, Zheng, Ma, Li, Xia, Xu, Wu, Chang,
  et~al.]{dong2024survey}
Qingxiu Dong, Lei Li, Damai Dai, Ce~Zheng, Jingyuan Ma, Rui Li, Heming Xia,
  Jingjing Xu, Zhiyong Wu, Baobao Chang, et~al.
\newblock A survey on in-context learning.
\newblock In \emph{Proceedings of the 2024 conference on empirical methods in
  natural language processing}, pages 1107--1128, 2024.

\bibitem[Grazzi et~al.(2024)Grazzi, Siems, Schrodi, Brox, and
  Hutter]{grazzi2024mamba}
Riccardo Grazzi, Julien Siems, Simon Schrodi, Thomas Brox, and Frank Hutter.
\newblock Is mamba capable of in-context learning?
\newblock \emph{arXiv preprint arXiv:2402.03170}, 2024.

\bibitem[Gu and Dao(2023)]{gu2023mamba}
Albert Gu and Tri Dao.
\newblock Mamba: Linear-time sequence modeling with selective state spaces.
\newblock \emph{arXiv preprint arXiv:2312.00752}, 2023.

\bibitem[Gu et~al.(2021)Gu, Goel, and R{\'e}]{gu2021efficiently}
Albert Gu, Karan Goel, and Christopher R{\'e}.
\newblock Efficiently modeling long sequences with structured state spaces.
\newblock \emph{arXiv preprint arXiv:2111.00396}, 2021.

\bibitem[Huang et~al.(2025)Huang, Sarabia, Moudgil, Rodriguez, Zappella, and
  Danieli]{huang2025understanding}
Ningyuan Huang, Miguel Sarabia, Abhinav Moudgil, Pau Rodriguez, Luca Zappella,
  and Federico Danieli.
\newblock Understanding input selectivity in mamba: impact on approximation
  power, memorization, and associative recall capacity.
\newblock \emph{arXiv preprint arXiv:2506.11891}, 2025.

\bibitem[Jelassi et~al.(2024)Jelassi, Brandfonbrener, Kakade, and
  Malach]{jelassi2024repeat}
Samy Jelassi, David Brandfonbrener, Sham~M Kakade, and Eran Malach.
\newblock Repeat after me: Transformers are better than state space models at
  copying.
\newblock In \emph{International Conference on Machine Learning}, pages
  21502--21521. PMLR, 2024.

\bibitem[Jordan et~al.(2024)Jordan, Jin, Boza, Jiacheng, Cesista, Newhouse, and
  Bernstein]{jordan2024muon}
Keller Jordan, Yuchen Jin, Vlado Boza, You Jiacheng, Franz Cesista, Laker
  Newhouse, and Jeremy Bernstein.
\newblock Muon: An optimizer for hidden layers in neural networks, 2024.
\newblock \emph{URL https://kellerjordan. github. io/posts/muon}, 6\penalty0
  (3):\penalty0 4, 2024.

\bibitem[Katharopoulos et~al.(2020)Katharopoulos, Vyas, Pappas, and
  Fleuret]{katharopoulos2020transformers}
Angelos Katharopoulos, Apoorv Vyas, Nikolaos Pappas, and Fran{\c{c}}ois
  Fleuret.
\newblock Transformers are rnns: Fast autoregressive transformers with linear
  attention.
\newblock In \emph{International conference on machine learning}, pages
  5156--5165. PMLR, 2020.

\bibitem[Koren et~al.(2026)Koren, Ben-Kish, Giryes, Wolf, and
  Zimerman]{korenrecall}
Yuval Koren, Assaf Ben-Kish, Raja Giryes, Lior Wolf, and Itamar Zimerman.
\newblock On the recall scaling laws in mamba: A theoretical and mechanistic
  study via hashing.
\newblock \emph{arXiv preprint arXiv:2609.07681}, 2026.

\bibitem[Li et~al.(2025{\natexlab{a}})Li, Lu, Cui, Chen, and Wang]{li2025can}
Hongkang Li, Songtao Lu, Xiaodong Cui, Pin-Yu Chen, and Meng Wang.
\newblock Can mamba learn in context with outliers? a theoretical
  generalization analysis.
\newblock \emph{arXiv preprint arXiv:2510.00399}, 2025{\natexlab{a}}.

\bibitem[Li et~al.(2024)Li, Wei, Zhao, and Ma]{li2024can}
Yingcong Li, Xupeng Wei, Haonan Zhao, and Taigao Ma.
\newblock Can mamba in-context learn task mixtures?
\newblock In \emph{ICML 2024 Workshop on In-Context Learning}, 2024.

\bibitem[Li et~al.(2025{\natexlab{b}})Li, Tarzanagh, Rawat, Fazel, and
  Oymak]{li2025gating}
Yingcong Li, Davoud~Ataee Tarzanagh, Ankit~Singh Rawat, Maryam Fazel, and Samet
  Oymak.
\newblock Gating is weighting: Understanding gated linear attention through
  in-context learning.
\newblock \emph{arXiv preprint arXiv:2504.04308}, 2025{\natexlab{b}}.

\bibitem[Loshchilov and Hutter(2017)]{loshchilov2017decoupled}
Ilya Loshchilov and Frank Hutter.
\newblock Decoupled weight decay regularization.
\newblock \emph{arXiv preprint arXiv:1711.05101}, 2017.

\bibitem[Lu et~al.(2025{\natexlab{a}})Lu, Huang, Zeng, Wang, Chen, Langlais,
  and Cui]{lu2025mamba}
Peng Lu, Jerry Huang, Qiuhao Zeng, Xinyu Wang, Boxing Chen, Philippe Langlais,
  and Yufei Cui.
\newblock Mamba modulation: On the length generalization of mamba.
\newblock \emph{arXiv preprint arXiv:2509.19633}, 2025{\natexlab{a}}.

\bibitem[Lu et~al.(2025{\natexlab{b}})Lu, Letey, Zavatone-Veth, Maiti, and
  Pehlevan]{lu2025asymptotic_icl}
Yue~M Lu, Mary Letey, Jacob~A Zavatone-Veth, Anindita Maiti, and Cengiz
  Pehlevan.
\newblock Asymptotic theory of in-context learning by linear attention.
\newblock \emph{Proceedings of the National Academy of Sciences}, 122\penalty0
  (28):\penalty0 e2502599122, 2025{\natexlab{b}}.

\bibitem[Malach et~al.(2025)Malach, Saremi, Williamson, Bradley, Lotfi, Abbe,
  Susskind, and Littwin]{malach2025infinity}
Eran Malach, Omid Saremi, Sinead Williamson, Arwen Bradley, Aryo Lotfi,
  Emmanuel Abbe, Josh Susskind, and Etai Littwin.
\newblock To infinity and beyond: Tool-use unlocks length generalization in
  state space models.
\newblock \emph{arXiv preprint arXiv:2510.14826}, 2025.

\bibitem[Okpekpe and Orvieto(2025)]{okpekpe2025recalling}
Destiny Okpekpe and Antonio Orvieto.
\newblock When recalling in-context, transformers are not ssms.
\newblock \emph{arXiv preprint arXiv:2508.19029}, 2025.

\bibitem[Park et~al.(2024)Park, Park, Xiong, Lee, Cho, Oymak, Lee, and
  Papailiopoulos]{park2024can}
Jongho Park, Jaeseung Park, Zheyang Xiong, Nayoung Lee, Jaewoong Cho, Samet
  Oymak, Kangwook Lee, and Dimitris Papailiopoulos.
\newblock Can mamba learn how to learn? a comparative study on in-context
  learning tasks.
\newblock \emph{arXiv preprint arXiv:2402.04248}, 2024.

\bibitem[Patil et~al.(2024)Patil, Zhang, Wang, and Gonzalez]{patil2024gorilla}
Shishir~G Patil, Tianjun Zhang, Xin Wang, and Joseph~E Gonzalez.
\newblock Gorilla: Large language model connected with massive apis.
\newblock \emph{Advances in Neural Information Processing Systems},
  37:\penalty0 126544--126565, 2024.

\bibitem[Patil et~al.(2025)Patil, Mao, Cheng-Jie~Ji, Yan, Suresh, Stoica, and
  E.~Gonzalez]{patil2025bfcl}
Shishir~G. Patil, Huanzhi Mao, Charlie Cheng-Jie~Ji, Fanjia Yan, Vishnu Suresh,
  Ion Stoica, and Joseph E.~Gonzalez.
\newblock The berkeley function calling leaderboard (bfcl): From tool use to
  agentic evaluation of large language models.
\newblock In \emph{Forty-second International Conference on Machine Learning},
  2025.

\bibitem[Qin et~al.(2023)Qin, Liang, Ye, Zhu, Yan, Lu, Lin, Cong, Tang, Qian,
  et~al.]{qin2023toolllm}
Yujia Qin, Shihao Liang, Yining Ye, Kunlun Zhu, Lan Yan, Yaxi Lu, Yankai Lin,
  Xin Cong, Xiangru Tang, Bill Qian, et~al.
\newblock Toolllm: Facilitating large language models to master 16000+
  real-world apis.
\newblock In \emph{The twelfth international conference on learning
  representations}, 2023.

\bibitem[Qwen(2026)]{qwen35blog}
Alibaba Qwen.
\newblock Qwen3.5: Accelerating productivity with native multimodal agents,
  February 2026.
\newblock URL \url{https://qwen.ai/blog?id=qwen3.5}.

\bibitem[Reddy(2023)]{reddy2023mechanistic}
Gautam Reddy.
\newblock The mechanistic basis of data dependence and abrupt learning in an
  in-context classification task.
\newblock \emph{arXiv preprint arXiv:2312.03002}, 2023.

\bibitem[Ren et~al.(2025)Ren, Li, and Liu]{ren2025exploring}
Ruifeng Ren, Zhicong Li, and Yong Liu.
\newblock Exploring the limitations of mamba in copy and cot reasoning.
\newblock In \emph{Proceedings of the 2025 Conference on Empirical Methods in
  Natural Language Processing}, pages 12550--12574, 2025.

\bibitem[Robertson and Zaragoza(2009)]{robertson2009bm25}
Stephen Robertson and Hugo Zaragoza.
\newblock \emph{The probabilistic relevance framework: BM25 and beyond},
  volume~4.
\newblock Now Publishers Inc, 2009.

\bibitem[Ruiz and Gu(2025)]{ruiz2025understanding}
Ricardo~Buitrago Ruiz and Albert Gu.
\newblock Understanding and improving length generalization in recurrent
  models.
\newblock \emph{arXiv preprint arXiv:2507.02782}, 2025.

\bibitem[Su et~al.(2025)Su, Kong, Lin, Jennings, Norick, Kliegl, Patwary,
  Shoeybi, and Catanzaro]{su2025nemotron}
Dan Su, Kezhi Kong, Ying Lin, Joseph Jennings, Brandon Norick, Markus Kliegl,
  Mostofa Patwary, Mohammad Shoeybi, and Bryan Catanzaro.
\newblock Nemotron-cc: Transforming common crawl into a refined long-horizon
  pretraining dataset.
\newblock In \emph{Proceedings of the 63rd Annual Meeting of the Association
  for Computational Linguistics (Volume 1: Long Papers)}, pages 2459--2475,
  2025.

\bibitem[Su et~al.(2024)Su, Ahmed, Lu, Pan, Bo, and Liu]{su2024roformer}
Jianlin Su, Murtadha Ahmed, Yu~Lu, Shengfeng Pan, Wen Bo, and Yunfeng Liu.
\newblock Roformer: Enhanced transformer with rotary position embedding.
\newblock \emph{Neurocomputing}, 568:\penalty0 127063, 2024.

\bibitem[Trockman et~al.(2024)Trockman, Harutyunyan, Kolter, Kumar, and
  Bhojanapalli]{trockman2024mimetic}
Asher Trockman, Hrayr Harutyunyan, J~Zico Kolter, Sanjiv Kumar, and Srinadh
  Bhojanapalli.
\newblock Mimetic initialization helps state space models learn to recall.
\newblock \emph{arXiv preprint arXiv:2410.11135}, 2024.

\bibitem[Waleffe et~al.(2024)Waleffe, Byeon, Riach, Norick, Korthikanti, Dao,
  Gu, Hatamizadeh, Singh, Narayanan, et~al.]{waleffe2024empirical}
Roger Waleffe, Wonmin Byeon, Duncan Riach, Brandon Norick, Vijay Korthikanti,
  Tri Dao, Albert Gu, Ali Hatamizadeh, Sudhakar Singh, Deepak Narayanan, et~al.
\newblock An empirical study of mamba-based language models.
\newblock \emph{arXiv preprint arXiv:2406.07887}, 2024.

\bibitem[Wang et~al.(2024)Wang, Cai, Wang, Zhu, Srivastava, Wang, and
  Li]{wang2024understanding}
Peihao Wang, Ruisi Cai, Yuehao Wang, Jiajun Zhu, Pragya Srivastava, Zhangyang
  Wang, and Pan Li.
\newblock Understanding and mitigating bottlenecks of state space models
  through the lens of recency and over-smoothing.
\newblock \emph{arXiv preprint arXiv:2501.00658}, 2024.

\bibitem[Wang et~al.(2025)Wang, Chen, Yan, Lu, and Sun]{wang2025memmamba}
Youjin Wang, Yangjingyi Chen, Jiahao Yan, Jiaxuan Lu, and Xiao Sun.
\newblock Memmamba: Rethinking memory patterns in state space model.
\newblock \emph{arXiv preprint arXiv:2510.03279}, 2025.

\bibitem[Yang et~al.(2023)Yang, Wang, Shen, Panda, and Kim]{yang2023gated}
Songlin Yang, Bailin Wang, Yikang Shen, Rameswar Panda, and Yoon Kim.
\newblock Gated linear attention transformers with hardware-efficient training.
\newblock \emph{arXiv preprint arXiv:2312.06635}, 2023.

\bibitem[Yang et~al.(2024)Yang, Kautz, and Hatamizadeh]{yang2024gated}
Songlin Yang, Jan Kautz, and Ali Hatamizadeh.
\newblock Gated delta networks: Improving mamba2 with delta rule.
\newblock \emph{arXiv preprint arXiv:2412.06464}, 2024.

\bibitem[You et~al.(2025)You, Tang, Li, Yao, and Zhang]{you2025revealing}
Wangjie You, Zecheng Tang, Juntao Li, Lili Yao, and Min Zhang.
\newblock Revealing and mitigating the local pattern shortcuts of mamba.
\newblock In \emph{Findings of the Association for Computational Linguistics:
  ACL 2025}, pages 12156--12178, 2025.

\bibitem[Zhan et~al.(2025)Zhan, Zhao, Zhu, and Tang]{zhan2025overcoming}
Zhihao Zhan, Jianan Zhao, Zhaocheng Zhu, and Jian Tang.
\newblock Overcoming long-context limitations of state-space models via
  context-dependent sparse attention.
\newblock \emph{arXiv preprint arXiv:2507.00449}, 2025.

\bibitem[Zhang et~al.(2024)Zhang, Frei, and Bartlett]{zhang2024transformer_icl}
Ruiqi Zhang, Spencer Frei, and Peter~L Bartlett.
\newblock Trained transformers learn linear models in-context.
\newblock \emph{Journal of Machine Learning Research}, 25\penalty0
  (49):\penalty0 1--55, 2024.

\end{thebibliography}
